\newcommand{\A}{\mathcal{A}}
\newcommand{\Cc}{\mathcal C}
\newcommand{\E}{\mathbb{E}}
\newcommand{\N}{\mathcal N}
\newcommand{\Q}{\mathcal{Q}}
\newcommand{\p}{\mathcal{P}}
\newcommand{\Pp}{\mathbb{P}}
\newcommand{\R}{\mathbb{R}}
\newcommand{\Rr}{\mathcal R}
\newcommand{\Sc}{\mathcal{S}}
\newcommand{\his}{\mathcal{H}}
\newcommand{\one}{\mathbf{1}}
\DeclareMathOperator*{\argmax}{argmax}
\DeclareMathOperator*{\argmin}{argmin}
\newtheorem{cor}{Corollary}
\newtheorem{lemma}{Lemma}
\newtheorem{theorem}{Theorem}
\newtheorem{definition}{Definition}
\newtheorem{assumption}{Assumption}
\title{Ranking In Generalized Linear Bandits}
\author{
    Amitis Shidani\\
    George Deligiannidis, Arnaud Doucet
}
\begin{document}

\maketitle
\begin{abstract}
We study the ranking problem in generalized linear bandits. At each time, the learning agent selects an ordered list of items and observes stochastic outcomes. In recommendation systems, displaying an ordered list of the most attractive items is not always optimal as both position and item dependencies result in a complex reward function. A very naive example is the lack of diversity when all the most attractive items are from the same category. We model the position and item dependencies in the ordered list and design UCB and Thompson Sampling type algorithms for this problem. Our work generalizes existing studies in several directions, including position dependencies where position discount is a particular case, and connecting the ranking problem to graph theory.
\end{abstract}

\section{Introduction}
The \textit{multi-armed bandit} (MAB) problem is a sequential decision-making problem in which there are $K$ possible choices called arms, each with an unknown reward distribution. At each time step $t$, the decision-maker can choose one arm and see a reward sample drawn from its distribution. The goal is to minimize the regret, which in the simplest case is defined as the difference between the total expected reward when playing the optimal action over time horizon $T$ and the total expected reward collected by the decision-maker \citep{Gittins79banditprocesses, lattimore}. In the classic MAB problem, the arms are assumed independent. However, in the real world, arms are often dependent; pulling one arm gives information about the others. In cases like recommendation systems, the goal is to show an ordered list of items that best engage with the users and provide more rewards (i.e., clicks, watch time). To incorporate arm dependencies, \cite{lykouris2020feedback}, \cite{singh2020contextual}, and \cite{Buccapatnam} use the graph-based feedback setting based on the work of \cite{mannor2011bandits}. In this work, when the learner selects arm $a$, they also observe the rewards of all adjacent arms. \cite{correlatedMAB} introduced another approach where rewards obtained by pulling different arms are correlated.

On the ranking problem, \cite{Radlinski08learningdiverse} proposes algorithms that learn a marginal utility for each document at each rank separately by either exploring and then committing to the best arms or running separate bandit algorithms for each position. \cite{slivkins2013ranked} introduced a contextual bandit algorithm, where, the context for each position is the event that previous items have not been clicked. This work was generalized in \cite{ermis2020learning, lagree2016multiple} using contextual bandits for position-based models. \cite{lattimore} and \cite{pmlr-v162-gauthier22a} introduce a more general approach for click models \citep{chuklin2015click} where the objective is to identify the most attractive list. However, these works do not use any information on the items' similarities. In \cite{pmlr-v48-lif16}, the authors proposed a more general cascading bandit model using the position discount and contextual information between arms.

\subsection{Our Contribution}
In previous works, the expected reward function is non-decreasing with respect to items' attractiveness, i.e. if the user finds item $a$ more attractive than item $a'$, any ordered list with item $a'$ replaced by item $a$ provides a higher expected reward. This assumption can be very restrictive since the expected reward function may not always be monotonic in real scenarios; a realistic example is when the attractiveness of an item depends on neighboring items.

Here, we generalize the previous works in two ways. First, the reward function we propose can be non-monotonic, addressing the abovementioned issue. Second, in recommendation systems, it is reasonable to assume that items receive different levels of attention from users in different positions, i.e.\ having different attractiveness in each position. Discount factor \cite{pmlr-v48-lif16} is one way to address this. However, item-position dependency may be more complicated. Therefore, we let items share different contextual information at each position. We propose a novel graph-based ranking solution. To the best of our knowledge, this is the first work addressing these issues.

\section{Notation and Setting}
Let $A = \{1, \ldots, K\}$ be the finite set of arms, where for each arm $i$ there exists a vector $v_i \in \R^d$. We have $L$ slots available, for which we want to find the best-ordered list of $L$ items, where $L\ll K$. At each round $t \in [T]$ the learner chooses an ordered list of $L$ arms called an \emph{action} $a_t  = \{a_t^1, \ldots, a_t^L\} \in \A$, where for each $i$, $a_t^i \in A$ and $\A$ denotes the set of all the possible actions. At the end of each round, the learner observes the sample reward $r_{a_t}^l$ for each position $1 \leq l \leq L$. The goal is to minimize the expected regret $ \Rr_T$ over the time horizon $T$; i.e.\ we have:
\begin{equation*}
     \Rr_T = \E \left[\sum_{t = 1}^T \max_{a\in \A} \sum_{l=1}^L \E\left[r_a^l \right] - \sum_{t = 1}^T \sum_{l = 1}^L r_{a_t}^l \right].
\end{equation*}

Choosing the optimal $L$-tuple of items is NP-hard since it is equivalent to the maximum coverage problem \citep{Fisher1978}. The standard greedy algorithm for this problem translates to iteratively choosing the items with the highest reward, which is what \cite{Radlinski08learningdiverse} does. A simple scenario to address this is to formulate the problem with $K^L$ arms and $d \times L$ dimension. This setting is reducible to the standard finite-arm generalized linear bandit with the total reward of $L$ positions as $f(\langle \theta, v_{a_t}\rangle)$, which results in $\Tilde{O}(L\sqrt{dT\log(K)})$ regret. However, we propose another way that allows for different function behaviors $f^l$ for each position and cannot be reduced to the classical setting and has $\Tilde{O}(L\sqrt{dT})$ regret. For each action $a_t$ at round $t$ and position $l$, we assume the reward function $r_{a_t}^l$ to be as follows:
\begin{equation} \label{Eq:one}
    r_{a_t}^l = f^l(\langle \theta^l\; , \; v_{a_t^l} + w_l v_{{a_t}^{l-1}} \rangle) + \eta_t^l,
\end{equation}
where $f^l: \R \mapsto \R$ is a continuous and differentiable function called the \textit{link function}, $\langle \cdot \; , \; \cdot \rangle: \R^d \times \R^d \rightarrow \R$ is the Euclidean inner product, $\theta^l \in \Theta$ is an unknown $d$-dimensional vector for position $l$ from the convex compact set $\Theta \subseteq \R^d$, $w_l\in \R$ is a known parameter measuring the dependency of the reward function at position $l$ to the item in the previous position, and $\{\eta_t^l\}_{t,l}$ is a family of centered, independent $1$-subgaussian random variables. We denote the history \emph{before} the learner chooses action at time $t$ by $\his_t$. 

Equation \ref{Eq:one} assumes that the reward at position $l$ depends on the attractiveness of both the items at positions $l$ and $l-1$. The result can be generalized to a window of neighboring items instead (see Appendix \ref{win-gen}). Moreover, the parameter $w_l$ can be negative allowing the reward function to be non-monotonic. Finally, $\theta^l$ allows the arms to share contextual information at each position. Different parameters and reward functions at each position allow us to model a more general case of discount factors, i.e., model different users' behavior for each position. The discount factor model is a special case of $f^l(x) = x$ and $\theta^{l+1} = d_l \theta^l$, where $d_l$ denotes the discount parameter.

For the first position, as there is no previous item, one approach is to assume that $v_{a_t^0} = 0$ or $w_1 = 0$, for any action $a_t$. However, we could recommend a list based on the user's last action in a movie recommendation system. In this case, $v_{a_t^0}$ would be a vector embedding the user's last action, and $w_1$ would indicate its \emph{importance}, i.e.\ the degree to which it affects the list. We denote $v_{a_t^0}$ by $v_0$ in the rest of this paper.

We can now reformulate the problem by defining a new set of arms, called ``super-arms of set $A$'', as pairs of arms, i.e.\ $(i, j)$ where $i,j \in A$, denoting the items in the previous and present positions respectively. As there is no previous item for the first position, we denote the corresponding super-arms by $(0, i)$. We now propose a graph-based approach, which finds the best-ordered list using super-arms.

\section{The Graph-Based Approach for Ranking}\label{sec:graph}
Let us start by defining the $L$-layered graph. All graphs we consider are weighted directed graphs.

\begin{definition}\label{def}
The directed graph $G=(V,E)$ is ``$L$-layered'' if and only if (a) $V= \bigcup_{j=1}^L V_j$, where $V_i\cap V_j=\emptyset$ for $i\neq j$, (b) all edges $e\in E$ have the form $e=(v,w)$ where $v\in V_l, w\in V_{l+1}$ for some $0\leq l \leq L-1$, (c) there are no edges $e=(v,w)$ with $v\in V_0$, and (d) $l$-th layer, $V_l$, with $l \geq 2$, consists of the nodes with a depth of exactly $l-1$ from the nodes of the first layer $V_1$.
\end{definition}


\begin{figure}
\centering
\begin{subfigure}{.5\textwidth}
\centering
\includegraphics{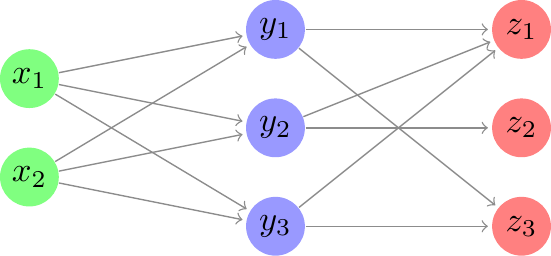}
\caption{A valid $3$-layered graph}
\label{fig:graph-valid}
\end{subfigure}%
\newline
\begin{subfigure}{.5\textwidth}
\centering
\includegraphics{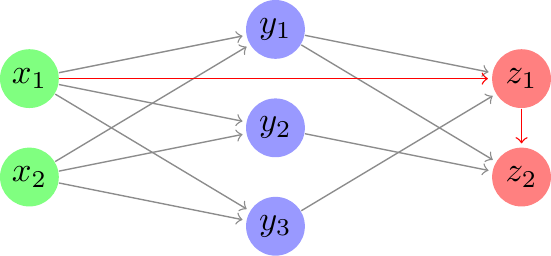}
\caption{An invalid $3$-layered graph}
\label{fig:graph-invalid}
\end{subfigure}%
\caption{An illustration of a Valid (\ref{fig:graph-valid}) and an Invalid (\ref{fig:graph-invalid}) $3$-Layered Graph. The graph in \ref{fig:graph-invalid} is invalid due to the red edges that violate conditions (a) and (b) of Definition \ref{def}.}
\label{fig:graph}
\end{figure}
An illustration of a valid and an invalid $3$-layered graph is presented in Figure \ref{fig:graph}. Now, we want to build a $L$-layered graph $G$ using the super-arms defined in the previous section. We add $K$ vertices to the first layer and $K^2$ vertices for each layer $2 \leq l \leq L$. We denote the nodes at layer one as $u_{0i}^1$ for $i \in [K]$, corresponding to the super-arm $(0, i)$; at layer $l$ we write $u_{ij}^l$, where $1\leq i,j \leq K$ for the vertex assigned to super-arm $(i, j)$ at position $l$. We connect the vertex $u_{ij}^l$, $l \in [L-1]$, to all the vertices $u_{jq}^{l+1}$, where $q \in [K]$. It is not hard to see that $G$ is $L$-layered. Additionally, note that $G = \bigcup_{i = 1}^K G_i$, where $G_i$ is the induced subgraph of $G$ that includes all the paths of $G$ containing $u_{0i}^1$.

Next, we define the weights of the edges for the weighted graph $G$ and the vector $\theta = (\theta^1, \ldots, \theta^L)$. If $e$ is an edge between vertices $u_{ij}^l$ and $u_{jq}^{l+1}$, then the weight of $e$ denoted by $c_e$ is defined as follows:
\begin{equation}\label{graph:weight}
\small
     c_e = \begin{cases}
     \begin{array}{@{}l} 
\textstyle
\phantom{{}-}
\frac{1}{2} \big(2 f^1(\langle \theta^1 \; , \; v_j + w_1 v_0 \rangle) \\ \qquad{}+ f^2(\langle \theta^2 \; , \; v_q + w_2 v_j \rangle)\big)
\end{array} &\text{ if $l = 1$;}\\ 
\begin{array}{@{}l} 
\textstyle
\phantom{{}-}
\frac{1}{2}\big(f^{L-1}(\langle \theta^{L-1} \; , \; v_j + w_{L-1} v_i \rangle) \\ \qquad{}+ 2f^L(\langle \theta^L \; , \; v_q + w_L v_j \rangle)\big)
\end{array}&\text{ if $l = L-1$;}\\
\begin{array}{@{}l} 
\textstyle
\phantom{{}-}
\frac{1}{2}\big(f^l(\langle \theta^{l} \; , \; v_j + w_l v_i \rangle) \\ \qquad{}+ f^{l+1}(\langle \theta^{l+1} \; , \; v_q + w_{l+1} v_j \rangle)\big)
\end{array} &\text{ otherwise.}
     \end{cases}
\end{equation}

We call this process of building $G$ as ``$L$-layering'' over super-arms of set $A$ and vector $\theta$. 
Now, consider a path $p$ with $L$ vertices. In the $L$-layered graph, a path starts with one of the first layer vertices and ends at a vertex from the $L$-th layer resulting in a sequence of the form $\{u_{0i_1}^1, u_{i_1i_2}^2, \ldots, u_{i_{L-1}i_L}^L\}$. The sum of the weights of this path is equal to the expected reward of playing the action $(i_1, \ldots, i_L)$. The interesting thing about this graph is that every path with $L$ vertices provides a valid ordered list for the main ranking problem. Moreover, the problem of finding the best-ordered list corresponds to finding the longest weighted path. Two problems arise, $(1)$ the time complexity of finding the longest weighted path needs to be controlled, and $(2)$ the vector $\theta$ (i.e.\ the reward functions $r^l$) is unknown.

\textbf{Longest Weighted Path.~}{Finding the longest weighted path of an arbitrary graph $G$ is NP-hard \citep{algobook}. However, if $G$ is a directed acyclic graph, then no negative cycles can be created, and the longest path in $G$ can be determined in linear time by finding the shortest path in $-G$ (replacing every weight with its opposite) \citep{cormen01introduction}. If $G$ is $L$-layered, it is also a directed acyclic graph, and we can find the shortest path which gives us the best-ordered list of items. Moreover, the $L$-layered property already gives a topological ordering for the graph $G$, which helps the shortest path algorithms. Also, note that we can reduce the time complexity of finding the shortest path of $G$ by running the algorithm for each $G_i$ in parallel and then comparing the shortest paths of sub-graphs. For instance, if we use Dijkstra's algorithm \citep{Sniedovich2006}, the worst-case running time complexity would be $O\left(|E_G| + |V_G|\log(|V_G|)\right)$, where $|E_G|$ and $|V_G|$ represent the number of edges and the number of vertices of graph $G$. For the $L$-layered graph $G$ with $K$ arms, it would be $O(K^3)$.}

\textbf{Unknown Vector $\theta$.~}{To find the longest path, we need to know the weights of the graph, which is not possible if the vector $\theta$ is unknown. This situation is similar to the MAB problem where we cannot play the optimal action from the beginning. In the MAB setting, we estimate the expected reward for each action at each round and then play the action with the highest expected reward. We will take a similar approach here. For any algorithm that estimates the expected reward of each super-arm, we will be able to find the longest path in the graph with these estimated weights. At each round, we update the weights based on the reward history of the super-arms. When the algorithm converges to the actual values of the expected reward, the longest path would also converge to the best-ordered list.}

We use the $L$-layered technique to find the best-ordered list in the next section by adapting famous algorithms, UCB \citep{Auer2002}. In Appendix \ref{sec:ts} we apply Thompson Sampling \citep{TSbook}.

\section{Ranking UCB Algorithm}\label{sec:ucb}
We first explain the main idea behind the algorithm. By Equation \ref{Eq:one}, we have:
$$\textstyle \E\left[ r_{a}^l | \his_t \right] = f^l({\theta^l}^{\textup{T}}(v_{a^l} + w_l v_{a^{l-1}})).$$

We estimate $\theta^l$ and the expected reward using the Maximum Likelihood Estimator (MLE) in the classical likelihood theory of generalized linear models \citep{mccullagh1989generalized} with samples $x_t^l = v_{a_t^l} + w_l v_{a_t^{l-1}}$ and labels $r_{a_t}^l$.

Then, we construct a confidence set $\Cc_t^l \subset \R^d$ that contains the unknown parameter $\theta^l$ with high probability. \cite{filippi2010} was the first to study generalized linear bandits using UCB methods. However, the bound is not optimal with respect to $T$. Here, we use an approach that provides the optimal bound. First, let us define the following variables:
\begin{equation}\label{eq:genlin-g}
    \textstyle g_t^l(\theta) = \lambda \theta + \sum_{s=1}^t f^l(\langle \theta\; , \; x_s^l\rangle)x_s^l
\end{equation}
\begin{equation}\label{eq:genlin-l}
    \textstyle L_t^l(\theta) = \|g_t^l(\theta) - \sum_{s=1}^t r_s^l x_s^l\|_{V_t^{l^{-1}}}
\end{equation}
where $V_0^l(\lambda) = \lambda I$, $V_t^l(\lambda) = V_0^l(\lambda) + \sum_{s = 1}^t x_s^l {x_s^l}^{\textup{T}}$, and $x_s^l = v_{a_s^l} + w_l v_{a_s^{l-1}}$. Note that $V_t^l(\lambda) \in \R^{d\times d}$ is a symmetric strictly positive definite matrix, and for any strictly positive definite matrix $V$, a norm on $\R^d$ is given by $\|x \|_V = (x^{\textup{T}} V x)^\frac{1}{2}$. Now, we have the following lemma:
\begin{lemma} \label{lem:genlinconf}
Let $\delta \in (0,1)$, and $\textstyle \sqrt{\beta_t^l} = \sqrt{\lambda} \|\theta^l\|_2 + \sqrt{2\log\left(\frac{1}{\delta}\right) + \log\left(\frac{\det\left(V_t^l(\lambda)\right)}{\lambda^d}\right)}$. Define $\Cc_t^l$ as follows:
\begin{equation}\label{eq:conf-gen}
    \textstyle \Cc_t^l = \left \{\theta \in \Theta: L_t^l(\theta) \leq \sqrt{\beta_t^l}\right \}
\end{equation}
Then, with probability at least $1-\delta$, it holds that for any time $t$, $\theta^l \in \Cc_t^l$; i.e.~$\Pp(\exists t: \theta^l \notin \Cc_t^l) \leq \delta$.
\end{lemma}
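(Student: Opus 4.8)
Throughout I work at a fixed position $l$; the claim is a statement about that single index. The plan is to reduce it to a self-normalized tail bound for the $\R^d$-valued martingale $S_t^l:=\sum_{s=1}^t\eta_s^l x_s^l$. Substituting the model \eqref{Eq:one}, i.e.\ $r_s^l=f^l(\langle\theta^l,x_s^l\rangle)+\eta_s^l$, into \eqref{eq:genlin-g}--\eqref{eq:genlin-l} gives the algebraic identity
\begin{equation*}
g_t^l(\theta^l)-\sum_{s=1}^t r_s^l x_s^l=\lambda\theta^l+\sum_{s=1}^t\bigl(f^l(\langle\theta^l,x_s^l\rangle)-r_s^l\bigr)x_s^l=\lambda\theta^l-S_t^l,
\end{equation*}
so $L_t^l(\theta^l)=\|\lambda\theta^l-S_t^l\|_{(V_t^l)^{-1}}\le\lambda\|\theta^l\|_{(V_t^l)^{-1}}+\|S_t^l\|_{(V_t^l)^{-1}}$ by the triangle inequality for the norm attached to the positive definite matrix $(V_t^l(\lambda))^{-1}$. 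Since $V_t^l(\lambda)\succeq\lambda I$ we have $(V_t^l(\lambda))^{-1}\preceq\lambda^{-1}I$, hence $\lambda\|\theta^l\|_{(V_t^l)^{-1}}\le\lambda\cdot\lambda^{-1/2}\|\theta^l\|_2=\sqrt{\lambda}\,\|\theta^l\|_2$, which is exactly the deterministic first term of $\sqrt{\beta_t^l}$.

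It remains to control $\|S_t^l\|_{(V_t^l)^{-1}}$ simultaneously over all $t$. I would use the filtration $\F_s^l:=\sigma(\his_{s+1},a_{s+1},\eta_1^l,\dots,\eta_s^l)$: then $x_s^l=v_{a_s^l}+w_l v_{a_s^{l-1}}$ is $\F_{s-1}^l$-measurable (the action $a_s$ is chosen from $\his_s$), while $\eta_s^l$ is independent of $\F_{s-1}^l$ --- which is generated only by the noise of earlier rounds, the externally randomized actions and the other positions' noises --- and is therefore a conditionally $1$-subgaussian martingale-difference increment. The method-of-mixtures argument then applies: for each fixed $\xi\in\R^d$, $M_t(\xi):=\exp\!\bigl(\langle\xi,S_t^l\rangle-\tfrac12\|\xi\|^2_{V_t^l(\lambda)-\lambda I}\bigr)$ is a nonnegative supermartingale with $M_0(\xi)=1$, and integrating over $\xi\sim\N(0,\lambda^{-1}I)$ yields the nonnegative supermartingale $\overline M_t=\bigl(\lambda^d/\det(V_t^l(\lambda))\bigr)^{1/2}\exp\!\bigl(\tfrac12\|S_t^l\|^2_{(V_t^l)^{-1}}\bigr)$ with $\overline M_0=1$. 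Ville's maximal inequality $\Pp(\exists t:\overline M_t\ge 1/\delta)\le\delta$ then gives, on an event of probability at least $1-\delta$,
\begin{equation*}
\|S_t^l\|^2_{(V_t^l(\lambda))^{-1}}\le 2\log\tfrac1\delta+\log\frac{\det\bigl(V_t^l(\lambda)\bigr)}{\lambda^d}\qquad\text{for all }t.
\end{equation*}

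Combining the two estimates, on that event $L_t^l(\theta^l)\le\sqrt{\lambda}\|\theta^l\|_2+\sqrt{2\log(1/\delta)+\log(\det(V_t^l(\lambda))/\lambda^d)}=\sqrt{\beta_t^l}$ for every $t$, i.e.\ $\theta^l\in\Cc_t^l$ for all $t$; therefore $\Pp(\exists t:\theta^l\notin\Cc_t^l)\le\delta$, as claimed. I expect the main obstacle to be not any single estimate but the filtration bookkeeping in the middle step: one has to verify carefully that $x_s^l$ (through the adaptively chosen $a_s$) is measurable with respect to information strictly preceding the revelation of $\eta_s^l$, and that the stated independence of the family $\{\eta_t^l\}_{t,l}$ genuinely yields conditional $1$-subgaussianity with respect to a filtration that also carries the other positions' rewards and the past actions --- once this is set up correctly, the Gaussian-integral computation for $\overline M_t$ and the maximal inequality are routine.
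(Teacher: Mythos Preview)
Your proposal is correct and follows essentially the same route as the paper: both substitute the model into $L_t^l(\theta^l)$ to obtain $\|\lambda\theta^l-S_t^l\|_{(V_t^l)^{-1}}$, split off the deterministic $\sqrt{\lambda}\|\theta^l\|_2$ term, and bound $\|S_t^l\|_{(V_t^l)^{-1}}$ via the self-normalized vector-martingale inequality of Abbasi-Yadkori et al. The paper simply cites that result, whereas you spell out the method-of-mixtures supermartingale and Ville's inequality (and the filtration bookkeeping) explicitly, but the underlying argument is the same.
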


The proof can be found in Appendix \ref{app:lingenconf} which uses the super-martingale technique introduced in \cite{ucbAbbasi}. Now, we can define the optimistic estimated reward for any super-arm $(i,j)$ and position $l$ in the UCB algorithm as follows:
\begin{equation}\label{eq:ucb}
  \textstyle \mathrm{UCB}_t^l(i,j) = \max_{\theta \in \Cc_t^l}f^l\left(\langle \theta \; , \; v_j + w_l v_i \rangle\right).
\end{equation}
Then, we use Equation \ref{graph:weight} to build the $L$-layering graph $G$ over the super-arms and the estimated rewards. Namely, at each round $t$, Equation \ref{eq:ucb} allows us to replace the weight $c_e$ for the edge $e=(u_{ij}^l, u_{jq}^{l+1})$ by the estimate $\hat{c}_e$:
\begin{equation}\label{eq:ucb_weight}
    \hat{c}_e = \begin{cases}\frac{1}{2} (2 \mathrm{UCB}_t^1(0, j) + \mathrm{UCB}_t^2(j, q)&\text{ if $l = 1$;}\\ \frac{1}{2}(\mathrm{UCB}_t^{L-1}(i, j) + 2 \mathrm{UCB}_t^L(j, q))&\text{ if $l = L-1$;}\\ \frac{1}{2}(\mathrm{UCB}_t^l(i, j) + \mathrm{UCB}_t^{l+1}(j, q)) &\text{ otherwise.}\end{cases}
\end{equation}

Finding the longest path of $G$ leads us to the best-ordered list for each round $t$ using the UCB algorithm. The complete algorithm, RankUCB, is described in Algorithm \ref{algo:ucb}. We will now provide a regret bound for the RankUCB algorithm under the following assumptions:

\begin{algorithm}[ht]
\caption{RankUCB}\label{algo:ucb}
\begin{algorithmic}[1]
\STATE \textbf{Input:} $\lambda > 0$, $\delta \in (0,1)$, $L$, $\{w_l\}_{l\leq L}$, $T$, arm set $A = \{1, \ldots, K\}$, and vector $v_0$
\STATE Create $L$-layered graph $G = \bigcup_{i=1}^K G_i$ over super-arms of set $A$
\STATE Initialization: $\hat{\theta}_0^l = 0$, $V_0^l = \lambda I$ for $l \in [L]$, and for any edge $e$ of $G$, set $\hat{c}_e = 0$
\FOR{$t = 1, 2, \ldots, T$}
    \STATE Obtain $p_i \gets \mathrm{ShortestPathAlgorithm}(-G_i)$ for all $i \in [K]$ simultaneously
    \STATE $p_\star \gets \argmin_{p_i} \sum_{e \in p_i} \hat{c_e}$
    \STATE Choose action $a_t$ as the ordered vertices of path $p_\star$
    \STATE Play $a_t$ and observe $r_{a_t}^l$ for $l \in [L]$
    \FOR{$l = 1, \ldots L$}
        \STATE $V_t^l(\lambda)\!\gets\!V_{t-1}^l\!+\!(v_{a_t^l}\!+\!w_l v_{a_t^{l-1}})(v_{a_t^l}\!+\!w_l v_{a_t^{l-1}})^{\textup{T}}$
        \STATE Create $\Cc_{t+1}^l$ based on Equation \ref{eq:conf-gen}
        \STATE $\mathrm{UCB}_{t+1}^l(i,j)\!\gets\!\max_{\theta \in \Cc_{t+1}^l}f^l\langle\left(\theta, v_j\!+\!w_l v_i \rangle\right)$ for all super-arms $(i, j)$
        \STATE Update $\hat{c}_e$, for any edge $e$, based on Equation \ref{eq:ucb_weight}
    \ENDFOR
\ENDFOR
\end{algorithmic}
\end{algorithm}

\begin{figure*}[ht!]
\centering
\begin{subfigure}{.5\textwidth}
  \centering
  \includegraphics[width=0.95\linewidth]{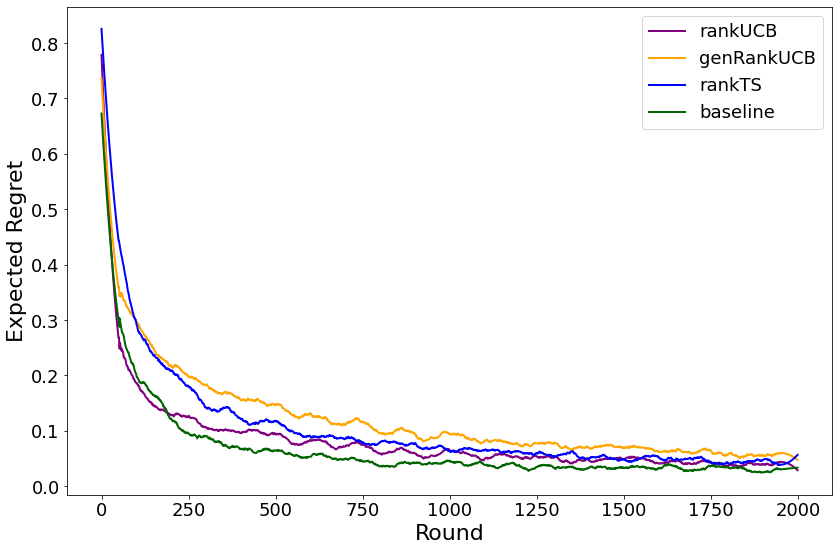}
\end{subfigure}%
\begin{subfigure}{.5\textwidth}
  \centering
  \includegraphics[width=0.95\linewidth]{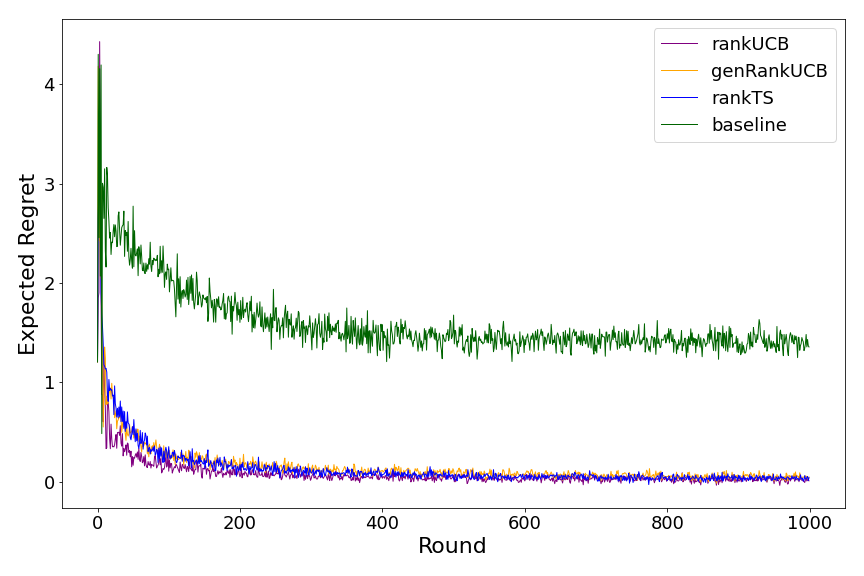}
\end{subfigure}
\caption{Expected regret for $K = 100$. \textbf{Left:} $w_l = 0\; \; \forall l \in [L]$, \textbf{Right:} $\max_{l \in [L]} |w_l| = 10$.}
\label{fig:main}
\end{figure*}

\begin{assumption}\label{ass:ucb}
For some $m_1, m_2 > 0$, the following hold: (a) for any arm $i \in A$, $\|v_i\|_2 \leq m_1$, (b) for all $l \in L$, $\|\theta^l\|_2 \leq m_2$, (c) $\sup_{l \in [L]}\sup_{a \in \R^d}|f^l(\langle \theta^l \; , \; a \rangle)| \leq 1$, (d) There exist $\delta \in (0,1)$ such that with probability at least $1-\delta$, for all $t\in [T]$ and $l \in [L]$, $\theta^l \in \Cc_t^l$ where $\Cc_t^l$ satisfies the Equation \ref{eq:conf-gen}.
\end{assumption}

\begin{assumption} \label{ass:genlin}
We denote the derivative of $f^l$ by $\dot{f}^l$. There exist $c_1 > 0$ and $c_2 < \infty$ such that:
\begin{equation*}
\begin{split}
    \textstyle c_1 &= \min\{1, \min_{l\in [L]}\min_{a\in \R^d}\min_{\theta^l \in \Theta} \dot{f}^l(\langle \theta^l\; , \; a \rangle)\}\\
    c_2 &= \max\{1, \max_{l\in [L]}\max_{a\in \R^d}\max_{\theta^l \in \Theta} \dot{f}^l(\langle \theta^l\; , \; a \rangle)\}
    \end{split}
\end{equation*}
\end{assumption}

\begin{theorem}\label{thm:genlinucb}
Under Assumptions \ref{ass:ucb} and \ref{ass:genlin}, with probability at least $1-\delta$, the expected regret of the RankUCB algorithm satisfies:
\begin{equation}
    \textstyle \Rr_T \leq \frac{2\sqrt{2}c_2}{c_1}L\sqrt{dT \beta_T \log\left(1+\frac{T\left((1 + \max_{l\in [L]} |w_l|)m_1\right)^2}{d\lambda}\right)}
\end{equation}
where $$\textstyle \sqrt{\beta_T} = \max_{l \in [L]} \sqrt{\lambda} m_2 + \sqrt{2\log\left(\frac{1}{\delta}\right) + \log\left(\frac{\det\left(V_T^l(\lambda)\right)}{\lambda^d}\right)}.$$
\end{theorem}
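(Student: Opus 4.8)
The plan is to combine the optimism supplied by the longest-path construction with the standard self-normalized analysis of generalized linear bandits, carried out position by position and then summed over the $L$ positions. Throughout, work on the event $\mathcal E=\{\theta^l\in\Cc_t^l\text{ for all }t\in[T],\,l\in[L]\}$, which has probability at least $1-\delta$ by Assumption \ref{ass:ucb}(d) (an instance of Lemma \ref{lem:genlinconf}). Let $a^\star=\argmax_{a\in\A}\sum_l f^l(\langle\theta^l,v_{a^l}+w_lv_{a^{l-1}}\rangle)$ be the fixed optimal action, and write $x_t^{\star l}=v_{a^{\star l}}+w_lv_{a^{\star(l-1)}}$, $x_t^l=v_{a_t^l}+w_lv_{a_t^{l-1}}$. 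Because each $\eta_t^l$ is centered, the definition of $\Rr_T$ collapses to $\Rr_T=\E\big[\sum_{t=1}^T\sum_{l=1}^L\big(f^l(\langle\theta^l,x_t^{\star l}\rangle)-f^l(\langle\theta^l,x_t^l\rangle)\big)\big]$, so it suffices to bound this pseudo-regret on $\mathcal E$; the contribution of $\mathcal E^c$ is lower order by the boundedness in Assumption \ref{ass:ucb}(c).

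\emph{Optimism through the longest path.} By the coefficient bookkeeping behind \eqref{graph:weight}, the weight of an $L$-vertex path $\{u_{0i_1}^1,\dots,u_{i_{L-1}i_L}^L\}$ telescopes to $\sum_{l=1}^L f^l(\langle\theta^l,v_{i_l}+w_lv_{i_{l-1}}\rangle)$, the expected reward of the action $(i_1,\dots,i_L)$; with $c_e$ replaced by $\hat c_e$ as in \eqref{eq:ucb_weight}, the same path has weight $\sum_l\mathrm{UCB}_t^l(i_{l-1},i_l)$. On $\mathcal E$, $\theta^l\in\Cc_t^l$ yields $\mathrm{UCB}_t^l(i,j)\ge f^l(\langle\theta^l,v_j+w_lv_i\rangle)$ for every super-arm, so the UCB-weight of the path of $a^\star$ is at least $\sum_l f^l(\langle\theta^l,x_t^{\star l}\rangle)$; since Algorithm \ref{algo:ucb} plays the maximum-UCB-weight path, the played action $a_t$ satisfies $\sum_l\mathrm{UCB}_t^l(a_t^{l-1},a_t^l)\ge\sum_l f^l(\langle\theta^l,x_t^{\star l}\rangle)$, whence the per-round pseudo-regret is at most $\sum_{l=1}^L\big(\mathrm{UCB}_t^l(a_t^{l-1},a_t^l)-f^l(\langle\theta^l,x_t^l\rangle)\big)$.

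\emph{Interval widths and summation.} Let $\tilde\theta_t^l\in\Cc_t^l$ attain $\mathrm{UCB}_t^l(a_t^{l-1},a_t^l)=f^l(\langle\tilde\theta_t^l,x_t^l\rangle)$. By Assumption \ref{ass:genlin} and weighted Cauchy--Schwarz the $l$-th summand above is at most $c_2|\langle\tilde\theta_t^l-\theta^l,x_t^l\rangle|\le c_2\|\tilde\theta_t^l-\theta^l\|_{V_{t-1}^l}\,\|x_t^l\|_{(V_{t-1}^l)^{-1}}$. For the deviation factor, $\dot f^l\ge c_1$ with $c_1\le 1$ makes $\nabla_\theta g_t^l(\theta)\succeq\lambda I+c_1\sum_s x_s^l{x_s^l}^{\textup{T}}\succeq c_1 V_t^l$ on the convex set $\Theta$, hence $\langle g_t^l(\theta_1)-g_t^l(\theta_2),\theta_1-\theta_2\rangle\ge c_1\|\theta_1-\theta_2\|_{V_t^l}^2$; combined with $L_t^l(\theta^l)\le\sqrt{\beta_t^l}$, $L_t^l(\tilde\theta_t^l)\le\sqrt{\beta_t^l}$ and the triangle inequality in $\|\cdot\|_{(V_t^l)^{-1}}$, this gives $\|\tilde\theta_t^l-\theta^l\|_{V_{t-1}^l}\le 2\sqrt{\beta_{t-1}^l}/c_1$ with the indexing of Algorithm \ref{algo:ucb} (confidence set at round $t$ built from $\his_t$, so the relevant matrix is $V_{t-1}^l$). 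Bounding $\sqrt{\beta_{t-1}^l}\le\sqrt{\beta_T}$ via $\|\theta^l\|_2\le m_2$ and monotonicity of $\det V_t^l$, the pseudo-regret on $\mathcal E$ is at most $\frac{2c_2\sqrt{\beta_T}}{c_1}\sum_{l=1}^L\sum_{t=1}^T\|x_t^l\|_{(V_{t-1}^l)^{-1}}$. Finally, for each $l$, Cauchy--Schwarz in $t$ and the elliptic-potential (log-determinant) lemma \citep{ucbAbbasi}, with $\|x_t^l\|_2\le(1+\max_{l\in[L]}|w_l|)m_1$ from Assumption \ref{ass:ucb}(a), give $\sum_t\|x_t^l\|_{(V_{t-1}^l)^{-1}}\le\sqrt{2dT\log\!\big(1+T((1+\max_{l\in[L]}|w_l|)m_1)^2/(d\lambda)\big)}$; summing these $L$ identical bounds and collecting constants yields the claimed inequality, with $\sqrt{\beta_T}=\max_{l\in[L]}\big(\sqrt\lambda m_2+\sqrt{2\log(1/\delta)+\log(\det V_T^l/\lambda^d)}\big)$.

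The step I expect to be the main obstacle is the width bound: turning the implicitly defined confidence set $\Cc_t^l$ --- a sublevel set of the nonlinear statistic $L_t^l$ built from the GLM score $g_t^l$ --- into the clean ellipsoidal estimate $\|\tilde\theta_t^l-\theta^l\|_{V_{t-1}^l}\le 2\sqrt{\beta_{t-1}^l}/c_1$ requires the strong-monotonicity bound $\nabla g_t^l\succeq c_1 V_t^l$ and a careful triangle inequality in the data-dependent norm $\|\cdot\|_{(V_t^l)^{-1}}$, and it is precisely where Assumption \ref{ass:genlin} is used. The graph part, although the genuinely new ingredient of this work, reduces to checking that the weights \eqref{graph:weight} are calibrated so that path-sums reproduce total rewards; after that, optimism and the remaining calculation are the familiar GLM-UCB bookkeeping applied at each of the $L$ positions, with the extra factor $L$ coming only from this final summation over positions.
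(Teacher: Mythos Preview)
Your proposal follows essentially the same route as the paper's proof: optimism via the longest-path structure, the GLM width bound (the paper isolates this as Lemma~\ref{lem:thetatog}, proved by the mean-value theorem rather than your strong-monotonicity argument, but the conclusion $c_1\|\theta-\theta'\|_{V_t^l}\le\|g_t^l(\theta)-g_t^l(\theta')\|_{(V_t^l)^{-1}}$ and the resulting $\|\tilde\theta_t^l-\theta^l\|_{V_{t-1}^l}\le 2\sqrt{\beta_{t-1}^l}/c_1$ are identical), and then Cauchy--Schwarz plus the elliptical-potential lemma summed over the $L$ positions.

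The one place the paper is more careful is the last step: before invoking Lemma~\ref{lem:ellep} it uses Assumption~\ref{ass:ucb}(c) to cap the per-round regret by $2L$, which inserts the $\min\{1,\cdot\}$ required on the left-hand side of the elliptical-potential bound. You apply the lemma directly to $\sum_t\|x_t^l\|_{(V_{t-1}^l)^{-1}}^2$, but without the clipping (or an assumption such as $\lambda\ge M^2$ forcing $\|x_t^l\|_{(V_{t-1}^l)^{-1}}\le 1$) that sum is not controlled by the log-determinant, so this step should be made explicit.
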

Theorem \ref{thm:genlinucb} (see Appendix \ref{app:ucbproof} for proof) provides an upper bound of $\Tilde{O}(L\sqrt{dT})$ for the ranking MAB problem. The notation $\Tilde{O}$ drops the logarithmic complexity. This upper bound increases linearly on $L$ while capturing both item and position dependencies compared to previous works, where these dependencies were either ignored or simplified. 

\section{Experiments}\label{sec:exp}
In this section, we compare RankUCB, genRankUCB (Appendix \ref{sec:generalization}), and RankTS (Appendix \ref{sec:ts}) to the baseline algorithm \citep{Radlinski08learningdiverse}, where there is no assumption on the dependency between positions. To make the comparison fair to the baseline, we implemented the linear case. The experiments are contextual bandits with $d = 10$, $L = 4$ and $K \in \{10, 100\}$, and various values for weights $w_l$ to be small, large, and zero. The case where $w_l = 0$ for all $l \in [L]$ would be similar to the setting discussed in \cite{Radlinski08learningdiverse}. Regarding the parameters, we randomly choose $\theta_l^\prime \in \R^{d-1}$ with $\|\theta_l^\prime\|_2 = 1$ and let $\theta^l = (\tfrac{\theta_l^\prime}{2}\; , \; \tfrac{1}{2})$. We let the vector associated to arm $i$ be $v_i = (v_i^\prime\; , \; 1)$, where $v_i^\prime \in \R^{d-1}$ with $\|v_i^\prime\|_2 = 1$. This process will guarantee that $\sup_{l \in [L]}\sup_{i \in A}|\langle \theta^l \; , \; v_i \rangle| \leq 1$, which is required for assumptions. Next, we generate the weight $w_l$ by a random sample from the uniform distribution.

The results are reported in Figure \ref{fig:main} and Appendix \ref{app:exp}. When $\max_{l \in [L]} |w_l|$ is very small or zero, Figure \ref{fig:main}-left, all the algorithms perform well. As $\max_{l \in [L]} |w_l|$ increases, the baseline algorithm, which does not capture the position dependencies, does not converge to the optimal action. This leads to a non-zero regret over time $T$. In contrast, our algorithms perform well with all ranges of $w_l$. 
\section{Conclusion}\label{sec:conc}
We studied the ranking problem in multi-armed bandits with position-item dependency with generalized linear reward functions. We proposed two algorithms, RankUCB and RankTS, where the key idea is to formulate the optimal ordered list as the longest path in a graph. The experiments show the advantage of involving position dependency. We hope this work motivates the community to gather temporal data to improve the ranking in recommendation systems.

\section*{Acknowledgement}
Arnaud Doucet is partly supported by the EPSRC grant EP/R034710/1. He also acknowledges the support of the UK Defence Science and Technology Laboratory (DSTL) and EPSRC under grant EP/R013616/1. This is part of the collaboration between US DOD, UK MOD, and UK EPSRC, under the Multidisciplinary University Research Initiative.

\bibliography{aaai24}


\onecolumn
\appendix

\section{RankUCB Proofs}
\subsection{Proof of Lemma \ref{lem:genlinconf}} \label{app:lingenconf}
Here, we give the proof of Lemma \ref{lem:genlinconf}.

\begin{proof}
It is enough to show that $L_t^l(\theta^l) \leq \sqrt(\beta_t^l)$. To do so, we use Equations \ref{eq:genlin-g} and \ref{eq:genlin-l}, and rewrite $L_t^l(\theta)$ as follows:
\begin{equation*}
\begin{split}
    L_t^l(\theta^l) &= \| \lambda \theta^l + \sum_{s=1}^t \left[f(\langle \theta^l\; , \; v_{a_s^l} + w_l v_{a_s^{l-1}}\rangle) - f(\langle \theta^l\; , \; v_{a_s^l} + w_l v_{a_s^{l-1}}\rangle) - \eta_s^l\right](v_{a_s^l} + w_l v_{a_s^{l-1}}) \|_{V_t^{l^{-1}}}\\
    &= \|\lambda \theta^l - \sum_{s=1}^t \eta_s^l(v_{a_s^l} + w_l v_{a_s^{l-1}})\|_{V_t^{l^{-1}}}\\
    & \leq \sqrt{\lambda} \|\theta^l\|_2 + \|\sum_{s=1}^t \eta_s^l(v_{a_s^l} + w_l v_{a_s^{l-1}})\|_{V_t^{l^{-1}}}
\end{split}
\end{equation*}
Now, we can use the standard self-normalized bound for vector-valued martingales proposed in \cite{ucbAbbasi}, see Lemma $8$ and $9$ for details, to bound the second term in the right hand side. The proof follows the corresponding lemmas by replacing $X_s = v_{a_s^l} + w_l v_{a_s^{l-1}}$ for any time $s$.
\end{proof}

\subsection{Proof of Theorem \ref{thm:genlinucb}}\label{app:ucbproof}
In order to provide the proof for Theorem \ref{thm:genlinucb}, we first need the following lemmas, which often the second one is called the elliptical potential lemma:

\begin{lemma}\label{lem:thetatog}
For any $\theta, \theta' \in \Theta$ and $l \in [L]$, we have $c_1 \|\theta - \theta'\|_{V_t^l} \leq \|g_t^l(\theta) - g_t^l(\theta')\|_{V_t^{l^{-1}}}$.
\end{lemma}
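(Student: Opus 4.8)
The plan is to prove Lemma~\ref{lem:thetatog} by exploiting the fundamental theorem of calculus to express the difference $g_t^l(\theta) - g_t^l(\theta')$ as an integral of a positive-definite matrix acting on $\theta - \theta'$, and then lower-bound that matrix by $c_1 V_t^l$. First I would write, from the definition \eqref{eq:genlin-g},
\[
g_t^l(\theta) - g_t^l(\theta') = \lambda(\theta - \theta') + \sum_{s=1}^t \big(f^l(\langle \theta, x_s^l\rangle) - f^l(\langle \theta', x_s^l\rangle)\big) x_s^l.
\]
Using $f^l(\langle \theta, x_s^l\rangle) - f^l(\langle \theta', x_s^l\rangle) = \big(\int_0^1 \dot f^l(\langle \theta' + \tau(\theta-\theta'), x_s^l\rangle)\,\dd\tau\big)\langle x_s^l, \theta - \theta'\rangle$, I can factor out $\theta - \theta'$ and obtain $g_t^l(\theta) - g_t^l(\theta') = G_t (\theta - \theta')$, where
\[
G_t = \lambda I + \sum_{s=1}^t \Big(\int_0^1 \dot f^l(\langle \theta' + \tau(\theta-\theta'), x_s^l\rangle)\,\dd\tau\Big)\, x_s^l (x_s^l)^{\textup{T}}
\]
is symmetric. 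By Assumption~\ref{ass:genlin}, each scalar integrand is at least $c_1$, and since $c_1 \le 1$ we also have $\lambda I \succeq \lambda c_1 I$; hence $G_t \succeq c_1 V_t^l(\lambda)$ in the Loewner order.

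Next I would convert this operator inequality into the claimed norm bound. The natural quantity is the bilinear form: since $G_t \succeq c_1 V_t^l$, for any vector $z$ we have $z^{\textup{T}} G_t z \ge c_1\, z^{\textup{T}} V_t^l z = c_1 \|z\|_{V_t^l}^2$. Taking $z = \theta - \theta'$ and using Cauchy--Schwarz with respect to the inner product induced by $V_t^l$ (i.e.\ $\langle a, b\rangle_{V_t^l} = a^{\textup{T}} V_t^l b$, which applies because $G_t(\theta-\theta') = g_t^l(\theta) - g_t^l(\theta')$ and $V_t^l (V_t^l)^{-1} = I$):
\[
c_1 \|\theta-\theta'\|_{V_t^l}^2 \le (\theta-\theta')^{\textup{T}} G_t (\theta-\theta') = (\theta-\theta')^{\textup{T}} \big(g_t^l(\theta) - g_t^l(\theta')\big) \le \|\theta-\theta'\|_{V_t^l}\,\|g_t^l(\theta) - g_t^l(\theta')\|_{(V_t^l)^{-1}},
\]
where the last step is the generalized Cauchy--Schwarz inequality $a^{\textup{T}} b = (V_t^{l\,1/2} a)^{\textup{T}}(V_t^{l\,-1/2} b) \le \|a\|_{V_t^l}\|b\|_{(V_t^l)^{-1}}$. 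Dividing through by $\|\theta-\theta'\|_{V_t^l}$ (the case $\theta = \theta'$ being trivial) yields $c_1\|\theta-\theta'\|_{V_t^l} \le \|g_t^l(\theta) - g_t^l(\theta')\|_{(V_t^l)^{-1}}$, which is exactly the claim.

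The main obstacle here is essentially bookkeeping rather than a deep difficulty: one must be careful that the monotonicity of $f^l$ (strict positivity of $\dot f^l$) is what makes the mean-value coefficients positive, so that $G_t$ is genuinely positive definite and the Loewner inequality $G_t \succeq c_1 V_t^l$ holds; and one must correctly handle the two different weighted norms ($V_t^l$ versus its inverse) when applying Cauchy--Schwarz, which is the only place the ``$-1$'' exponent enters. I would also note in passing that $G_t$ depends on both $\theta$ and $\theta'$ through the integral, but this causes no problem since we only ever evaluate the bilinear form on the fixed vector $\theta - \theta'$. No subgaussian noise or probabilistic argument is needed for this lemma — it is a purely deterministic consequence of Assumption~\ref{ass:genlin} and the definitions.
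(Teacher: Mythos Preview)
Your proof is correct and follows essentially the same strategy as the paper: linearize $g_t^l(\theta) - g_t^l(\theta')$ via a mean-value argument to obtain a symmetric matrix $\succeq c_1 V_t^l$ acting on $\theta-\theta'$, then convert this Loewner bound into the desired norm inequality. The only cosmetic differences are that the paper invokes the scalar Mean Value Theorem at a single intermediate point (your integral form is arguably cleaner for the vector-valued $g_t^l$) and concludes by computing $\|M(\theta-\theta')\|_{(V_t^l)^{-1}} = \|\theta-\theta'\|_{M (V_t^l)^{-1} M} \ge c_1\|\theta-\theta'\|_{V_t^l}$ directly from $M \succeq c_1 V_t^l$, rather than via your Cauchy--Schwarz step.
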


\begin{proof}
Since $f$ is continuous and differentiable, $g_t^l(\theta)$ is also continuous and differentiable. By Mean Value Theorem, we have that there exist $\theta^\star$, such that:
\begin{equation*}
    g_t^l(\theta) - g_t^l(\theta') = \dot{g}_t^l(\theta^\star)(\theta - \theta')
\end{equation*}
Therefore, we have:
\begin{equation*}
    g_t^l(\theta) - g_t^l(\theta') = \underbrace{(\lambda I + \sum_{s=1}^{t} \dot{f}(\langle \theta^\star\; , \; v_{a_s^l} + w_l v_{a_s^{l-1}}\rangle)(v_{a_s^l} + w_l v_{a_s^{l-1}})(v_{a_s^l} + w_l v_{a_s^{l-1}})^\textup{T})}_{M_t^l}(\theta - \theta')
\end{equation*}
Since $M_t^l \succeq c_1 V_t^l$, then we have:
\begin{equation*}
    \|g_t^l(\theta) - g_t^l(\theta')\|_{V_t^{l^{-1}}} = \|\theta - \theta'\|_{M_t^l V_t^{l^{-1}} M_t^l} \geq c_1 \|\theta - \theta'\|_{V_t^l}
\end{equation*}
And the proof is complete.
\end{proof}

\begin{lemma}\label{lem:ellep}
Let $V_0 \in \R^{d\times d}$ be a positive definite matrix and $b_1, \ldots, b_T \in \R^d$ be a sequence of vectors with $\|b_t\|_2 \leq M < \infty$. For all $t \in [T]$, define $V_t = V_0 + \sum_{s\leq t}b_s b_s^{\textup{T}}$. Then,
$$\sum_{t=1}^T \min\{1\; , \; \|b_t\|_{V_t^{-1}}^2\} \leq 2\log\left(\frac{\det\left(V_T\right)}{\det\left(V_0\right)}\right) \leq 2d\log\left(\frac{\textup{tr}(V_0) + TM^2}{d \det\left(V_0\right)^\frac{1}{d}}\right).$$
\end{lemma}

\begin{proof}
If $V$ is a symmetric positive definite matrix, then $V + U = V^{1/2}(I + V^{-1/2}UV^{-1/2})V^{1/2}$. Moreover, for each $t \in [T]$, we have that $V_t$ is a symmetric positive definite matrix. Thus, for any $t \geq 1$, we can write:
\begin{align*}
    V_t = V_{t-1} + b_t b_t^{\textup{T}} = V_{t-1}^{1/2}\left(I + V_{t-1}^{-1/2}b_t b_t^{\textup{T}} V_{t-1}^{-1/2}\right)V_{t-1}^{-1/2}
\end{align*}
By noting that $\det(VU) = \det(V) \det(U)$, we have that:
\begin{align*}
    \det(V_t) = \det(V_{t-1}) \det\left(I + V_{t-1}^{-1/2}b_t b_t^{\textup{T}} V_{t-1}^{-1/2}\right) = \det(V_{t-1}) \left(1 + \|b_t\|_{V_{t-1}^{-1}}^2\right).
\end{align*}
The last equality is due to the fact that the determinant of a matrix is the product of its eigenvalues, and matrix $I + x x^{\textup{T}}$ has eigenvalues $1+\|x\|_2^2$ and $1$. By repeatedly applying this equality, we have that:
\begin{align*}
    \det(V_t) = \det(V_0) \prod_{s=1}^{t} \left(1 + \|b_s\|_{V_{s-1}^{-1}}^2\right).
\end{align*}
Therefore, we obtain
\begin{align}\label{eq:lem2proof}
    \frac{\det(V_t)}{\det(V_0)} = \prod_{s=1}^{t} \left(1 + \|b_s\|_{V_{s-1}^{-1}}^2\right).
\end{align}
Now, using Equation \ref{eq:lem2proof} and the fact that for any $x \geq 0$, $\min\{1,x\} \leq 2\log(1+x)$, we get the following:
$$\sum_{t=1}^T \min\left\{1\; , \; \|b_t\|_{V_t^{-1}}^2\right\} \leq 2 \sum_{t=1}^T \log\left(1 + \|b_t\|_{V_t^{-1}}^2\right) = 2 \log\left(\frac{\det(V_t)}{\det(V_0)}\right).$$
This proves the first inequality in the lemma. For the second inequality, we use the inequality of arithmetic and geometric means. So, we have that:
$$\det(V_T) = \prod_{i=1}^d \lambda_i \leq \left(\frac{1}{d} \sum_{i=1}^d \lambda_i\right)^d = \left(\frac{1}{d} \textup{tr}(V_T) \right)^d \leq \left(\frac{\textup{tr}(V_0) + TM^2}{d}\right)^d,$$
where $\lambda_1, \ldots, \lambda_d$ denote the eigenvalues of $V_T$, and the proof is complete.
\end{proof}

Now, we give the proof of Theorem \ref{thm:genlinucb}.

\begin{proof}
By Assumption \ref{ass:ucb}, it suffices to prove the bound on the event that for all $l \in [L]$, $\theta^l \in \Cc_t^l$. Let $a_\star = \argmax_{a\in \A} \sum_{l=1}^L f^l\left(\langle \theta^l \; , \; v_{a^l} + w_l v_{a^{l-1}} \rangle\right)$, and $R_t$ be the instantaneous total regret in round $t$. Then,
$$R_t = \sum_{l=1}^L f^l\left(\langle \theta^l \; , \; v_{a_\star^l} + w_l v_{a_\star^{l-1}} \rangle\right) - \sum_{l=1}^L f^l\left(\langle \theta^l \; , \; v_{a_t^l} + w_l v_{a_t^{l-1}} \rangle\right).$$
For each $l \in [L]$, let $\Tilde{\theta}_t^l \in \Cc_t^l$ be the parameter for which $f^l\left(\langle \Tilde{\theta}^l \; , \; v_{a_t^l} + w_l v_{a_t^{l-1}} \rangle\right) = \mathrm{UCB}_t^l(a_t^{l-1}, a_t^l)$. Now, the fact that $\theta^l \in \Cc_t^l$ and Equation \ref{eq:ucb} lead us to the following:
\begin{equation*}
    f^l\left(\langle \theta^l \; , \; v_{a_\star^l} + w_l v_{a_\star^{l-1}} \rangle\right) \leq \mathrm{UCB}_t^l(a_\star^{l-1}, a_\star^l)
\end{equation*}
\begin{equation*}  
    \Longrightarrow \sum_{l=1}^L f^l\left(\langle \theta^l \; , \; v_{a_\star^l} + w_l v_{a_\star^{l-1}} \rangle\right) \leq \sum_{l=1}^L \mathrm{UCB}_t^l(a_\star^{l-1}, a_\star^l).
\end{equation*}
Note that $a_\star$ corresponds to a path in graph $G$ of Algorithm \ref{algo:ucb}, and since the longest path of graph $G$ at round $t$ has been $a_t$, we can write:
\begin{equation*}
    \sum_{l=1}^L \mathrm{UCB}_t^l(a_\star^{l-1}, a_\star^l) \leq \sum_{l=1}^L \mathrm{UCB}_t^l(a_t^{l-1}, a_t^l) = \sum_{l=1}^L f^l\left(\langle \Tilde{\theta}^l \; , \; v_{a_t^l} + w_l v_{a_t^{l-1}} \rangle\right).
\end{equation*}
Therefore,
\begin{align*}
    R_t &= \sum_{l=1}^L f^l\left(\langle \theta^l \; , \; v_{a_\star^l} + w_l v_{a_\star^{l-1}} \rangle\right) - \sum_{l=1}^L f^l\left(\langle \theta^l \; , \; v_{a_t^l} + w_l v_{a_t^{l-1}} \rangle\right)\\
    &\leq \sum_{l=1}^L f^l\left(\langle \Tilde{\theta}_t^l \; , \; v_{a_t^l} + w_l v_{a_t^{l-1}} \rangle\right) - \sum_{l=1}^L f^l\left(\langle \theta^l \; , \; v_{a_t^l} + w_l v_{a_t^{l-1}} \rangle\right)\\
    &\leq \sum_{l=1}^L c_2 \langle \Tilde{\theta}_t^l - \theta^l \; , \; v_{a_t^l} + w_l v_{a_t^{l-1}} \rangle\\
    &\leq \sum_{l=1}^L c_2 \|v_{a_t^l} + w_l v_{a_t^{l-1}}\|_{V_{t-1}^{l^{-1}}} \|\Tilde{\theta}_t^l - \theta^l\|_{V_{t-1}^l}.
\end{align*}
The last line follows from the Cauchy-Schwartz inequality and the line before that is concluded by Assumption \ref{ass:genlin}. Now, by using Lemma \ref{lem:thetatog} and Equation \ref{eq:genlin-l}, we have:
\begin{align*}
    R_t &\leq \sum_{l=1}^L c_2 \|v_{a_t^l} + w_l v_{a_t^{l-1}}\|_{V_{t-1}^{l^{-1}}} \|\Tilde{\theta}_t^l - \theta^l\|_{V_{t-1}^l}\\
    &\leq \sum_{l=1}^L c_2 \|v_{a_t^l} + w_l v_{a_t^{l-1}}\|_{V_{t-1}^{l^{-1}}} \frac{\|g_{t-1}^l\left(\Tilde{\theta}_t^l\right) - g_{t-1}^l\left(\theta^l\right)\|_{V_{t-1}^l}}{c_1}\\
    &\leq \sum_{l=1}^L \frac{c_2}{c_1} \|v_{a_t^l} + w_l v_{a_t^{l-1}}\|_{V_{t-1}^{l^{-1}}} \left(L_{t-1}^l(\Tilde{\theta}^l) - L_{t-1}^l(\theta^l)\right)\\
    &\leq \sum_{l=1}^L \frac{2c_2}{c_1}\sqrt{\beta_{t-1}^l} \|v_{a_t^l} + w_l v_{a_t^{l-1}}\|_{V_{t-1}^{l^{-1}}}.
\end{align*}

The last inequality follows from Lemma \ref{lem:genlinconf}. By Assumption \ref{ass:ucb}, we can write that $R_t \leq 2 L$. Hence,
\begin{align*}
    R_t &\leq \min\left\{2 L\; , \; \sum_{l=1}^L \frac{2c_2}{c_1}\sqrt{\beta_{t-1}^l} \|v_{a_t^l} + w_l v_{a_t^{l-1}}\|_{V_{t-1}^{l^{-1}}}\right\}
\end{align*}
Using Lemma \ref{lem:genlinconf} and Assumption \ref{ass:ucb}, we have that:
$$ \sqrt{\beta_{t-1}^l} \leq \sqrt{\lambda} m_2 + \sqrt{2\log\left(\frac{1}{\delta}\right) + \log\left(\frac{\det\left(V_{t-1}^l(\lambda)\right)}{\lambda^d}\right)}$$
Thus,
\begin{align} \label{eq:intialRegBoundUCB}
\begin{split}
    R_t &\leq \frac{2c_2}{c_1} \sum_{l=1}^L \left(\sqrt{\lambda} m_2 + \sqrt{2\log\left(\frac{1}{\delta}\right) + \log\left(\frac{\det\left(V_{t-1}^l(\lambda)\right)}{\lambda^d}\right)}\right) \min\left\{1\; , \; \|v_{a_t^l} + w_l v_{a_t^{l-1}}\|_{V_{t-1}^{l^{-1}}}\right\}
\end{split}
\end{align}
Moreover, the expected regret can be written as $\Rr_T = \E\left[\sum_{t=1}^T R_t\right]$, which can be upper bounded by Equation \ref{eq:intialRegBoundUCB}. Also, note that $\left \{\det\left(V_t^l\right)\right \}_{t=0}^T$ is an increasing sequence.\footnote{For more details, see the proof of Lemma \ref{lem:ellep}.} Therefore, we can upper bound the regret as follows:
\begin{align*}
    \Rr_T  &= \E\left[\sum_{t=1}^T R_t\right]\\
    & \leq \frac{2c_2}{c_1}\sum_{t=1}^T \sum_{l=1}^L \sqrt{\beta_T} \min\left\{1\; , \; \|v_{a_t^l} + w_l v_{a_t^{l-1}}\|_{V_{t-1}^{l^{-1}}}\right\}\\
    & \leq \frac{2c_2}{c_1}\sqrt{LT\sum_{l=1}^L \sum_{t=1}^T \beta_T \min\left\{1\; , \; \|v_{a_t^l} + w_l v_{a_t^{l-1}}\|_{V_{t-1}^{l^{-1}}}^2\right\}}.
\end{align*}
The last inequality follows from Cauchy--Schwartz inequality. Now, we can use Lemma \ref{lem:ellep} to upper bound $\sum_{t=1}^T \min\left\{1\; , \; \|v_{a_t^l} + w_l v_{a_t^{l-1}}\|_{V_{t-1}^{l^{-1}}}^2\right\}$. One can check that if we define variable $b_t = v_{a_t^l} + w_l v_{a_t^{l-1}}$, and variable $M = (1 + \max_{l\in [L]} |w_l|)m_1$, then we can write:
\begin{equation*}
    \Rr_T \leq \frac{2c_2}{c_1}\sqrt{LT\beta_T \sum_{l=1}^L 2d\log\left(\frac{\textup{tr}(V_0^l) + T\left((1 + \max_{l\in [L]} |w_l|)m_1\right)^2}{d \det\left(V_0^l\right)^\frac{1}{d}}\right)}.
\end{equation*}
By replacing $\textup{tr}(V_0^l) = d\lambda$ and $\det(V_0^l) = \lambda^d$, we get the following bound:
\begin{equation*}
    \Rr_T \leq 2\sqrt{2}\frac{c_2}{c_1}L\sqrt{dT \beta_T \log\left(1+\frac{T\left((1 + \max_{l\in [L]} |w_l|)m_1\right)^2}{d\lambda}\right)}.
\end{equation*}
This completes the proof.
\end{proof}

\section{Generalization of RankUCB: Estimating Position Dependencies} \label{sec:generalization}
In Section \ref{sec:ucb}, we have assumed that the dependency parameters are known. However, in realistic scenarios, we need to estimate them. We now reformulate the problem in a way that allows us to jointly estimate $\theta = (\theta^1, \ldots, \theta^l)$ and $w = (w_1, \ldots, w_l)$. Then, we modify the RankUCB algorithm for this general case and provide the corresponding regret bound.

Let us rewrite the expected reward of action $a = (a^1, \ldots, a^L)$ at position $l$ as follows:
\begin{equation*}
    \textstyle \E\left[r_a^l\right] = f^l\left(\langle \theta^l \; , \; v_{a^l} + w_l v_{a^{l-1}} \rangle\right) = f^l\left(\langle \phi^l \; , \; \Tilde{x}_{a}^l \rangle\right),
\end{equation*}
where $\phi^l = \left( \theta^l\;  \; w_l \theta^l\right)^{\textup{T}} \in \R^{2d}$ and $\Tilde{x}_{a}^l = \left( v_{a^l}\;  \; v_{a^{l-1}}\right)^{\textup{T}} \in \R^{2d}$. We can follow a similar procedure to that in Section \ref{sec:ucb}. We define the modified required variables as follows:
\begin{align*}
g_t^l(\theta) &= \lambda \theta + \sum_{s=1}^t f^l(\langle \theta\; , \; x_s^l\rangle)x_s^l,\\
L_t^l(\theta) &= \|g_t^l(\theta) - \sum_{s=1}^t r_s^l x_s^l\|_{V_t^{l^{-1}}},
\end{align*}
where 
$\Tilde{V}_0^l(\lambda) = \lambda I \in \R^{2d \times 2d}$, and $\Tilde{V}_t^l(\lambda) = \Tilde{V}_0^l(\lambda) + \sum_{s = 1}^t \Tilde{x}_{a_s}^l \Tilde{x}_{a_s}^{l^{\textup{T}}}$. Now, we can use Lemma \ref{lem:genlinconf} to create a confidence interval for $\phi^l$ denoted by $\Tilde{\Cc}_t^l$. Thus, the estimated reward for super-arm $(i, j)$ at position $l$, which is denoted by vector $\Tilde{x}_{ji} = \left(v_j \;   \; v_i\right)^{\textup{T}}$, would be
\begin{equation}\label{eq:gen-ucb}
 \mathrm{UCB}_t^l(i, j) = \max_{\phi \in \Tilde{\Cc}_t^l} f^l\left(\langle \phi \; , \; \Tilde{x}_{ji} \rangle\right).
\end{equation}
Finally, to find the best-ordered list at each round, we can build the $L$-layered graph $G$ as before and use the Equations \ref{eq:gen-ucb} and \ref{eq:ucb_weight} to update the weights of the edges. The generalized algorithm, genRankUCB, is provided in Algorithm \ref{algo:gen-ucb}. The next theorem upper bounds the regret of this algorithm. First, we need the following assumption:

\begin{algorithm}
\caption{genRankUCB}\label{algo:gen-ucb}
\begin{algorithmic}[1]
\STATE \textbf{Input:}  $\lambda > 0$, $\delta \in (0,1)$, $L$, $T$, arm set $A = \{1, \ldots, K\}$, and vector $v_0$
\STATE Create $L$-layered graph $G = \bigcup_{i=1}^K G_i$ over super-arms of set $A$
\STATE Initialization: $\hat{\phi}_0^l = 0$, $\Tilde{V}_0^l = \lambda I$ for $l \in [L]$, and for any edge $e$ of $G$, set $\hat{c}_e = 0$
\FOR{$t = 1, 2, \ldots, T$}
    \STATE Obtain $p_i \gets \mathrm{ShortestPathAlgorithm}(-G_i)$ for all $i \in [K]$ simultaneously
    \STATE $p_\star \gets \argmin_{p_i} \sum_{e \in p_i} \hat{c_e}$
    \STATE Choose action $a_t$ as the ordered vertices of path $p_\star$
    \STATE Play $a_t$ and observe $r_{a_t}^l$ for $l \in [L]$
    \FOR{$l = 1, \ldots L$}
        \STATE $\Tilde{V}_t^l(\lambda) \gets \Tilde{V}_{t-1}^l + \Tilde{x}_{a_s}^l \Tilde{x}_{a_s}^{l^{\textup{T}}}$
        \STATE Create $\Tilde{\Cc}_{t+1}^l$ based on Lemma \ref{lem:genlinconf}
        \STATE $\mathrm{UCB}_{t+1}^l(i,j) \gets \max_{\phi \in \Tilde{\Cc}_{t+1}^l}f^l\left(\langle \phi \; , \; \Tilde{x}_{ji} \rangle\right)$ for all super-arms $(i, j)$
        \STATE Update $\hat{c}_e$, for any edge $e$, based on Equation \ref{eq:ucb_weight}
    \ENDFOR
\ENDFOR
\end{algorithmic}
\end{algorithm}

\begin{assumption}\label{ass:gen-ucb}
For some $m_1, m_2, m_3 > 0$, the following hold: (a) for any arm $i \in A$, $\|v_i\|_2 \leq m_1$, (b) for all $l \in L$, $\|\theta^l\|_2 \leq m_2$, (c) for all $l \in L$, $|w_l| \leq m_3$, (d) $\sup_{l \in [L]}\sup_{a \in \R^d}|f^l\left(\langle \theta^l \; , \; a \rangle\right)| \leq 1$, (e) There exist $\delta \in (0,1)$ such that with probability at least $1-\delta$, for all $t\in [T]$ and $l \in [L]$, $\phi^l \in \Tilde{\Cc}_t^l$ where $\Tilde{\Cc}_t^l$ satisfies the Equation \ref{eq:conf-gen} for $\phi^l$.
\end{assumption}

\begin{theorem}\label{thm:gen-ucb}
Under the conditions of Assumptions \ref{ass:gen-ucb} and \ref{ass:genlin}, with probability at least $1-\delta$, the expected regret of the genRankUCB algorithm satisfies:
\begin{equation}
    \Rr_T \leq 4\frac{c_2}{c_1}L\sqrt{d T \beta_T \log\left(1+\frac{2 T m_2^2}{ d\lambda}\right)}
\end{equation}
where $\sqrt{\beta_T} = \max_{l \in [L]} \sqrt{\lambda} \left(m_2 \sqrt{1+m_3^2}\right) + \sqrt{2\log\left(\frac{1}{\delta}\right) + \log\left(\frac{\det\left(\Tilde{V}_T^l(\lambda)\right)}{\lambda^{2d}}\right)}$.
\end{theorem}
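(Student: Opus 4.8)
The plan is to reduce the statement to the proof of Theorem~\ref{thm:genlinucb}, because the reformulated problem is again a generalized linear bandit, only in dimension $2d$: the feature vectors are $\Tilde{x}_a^l=(v_{a^l},v_{a^{l-1}})^{\textup{T}}$, the unknown parameters are $\phi^l=(\theta^l,w_l\theta^l)^{\textup{T}}$, and $f^l(\langle\theta^l,v_{a^l}+w_l v_{a^{l-1}}\rangle)=f^l(\langle\phi^l,\Tilde{x}_a^l\rangle)$, so the link functions — and hence the constants $c_1,c_2$ of Assumption~\ref{ass:genlin}, which bound $\dot{f}^l$ on all of $\R$ — are untouched. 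First I would observe that Lemmas~\ref{lem:genlinconf}, \ref{lem:thetatog} and \ref{lem:ellep} never used that the ambient space is exactly $\R^d$, so running them with $\Tilde{V}_t^l(\lambda)\in\R^{2d\times2d}$ and $\Tilde{x}$ in place of $V_t^l(\lambda)$ and $x$ gives: (i) $\phi^l\in\Tilde{\Cc}_t^l$ for all $t$ with probability at least $1-\delta$ — exactly Assumption~\ref{ass:gen-ucb}(e) — with $\sqrt{\beta_t^l}=\sqrt{\lambda}\,\|\phi^l\|_2+\sqrt{2\log(1/\delta)+\log(\det(\Tilde{V}_t^l(\lambda))/\lambda^{2d})}$; (ii) $c_1\|\phi-\phi'\|_{\Tilde{V}_t^l}\le\|g_t^l(\phi)-g_t^l(\phi')\|_{\Tilde{V}_t^{l^{-1}}}$ for all $\phi,\phi'$; and (iii) the elliptical-potential bound of Lemma~\ref{lem:ellep} applied to the $2d$-dimensional matrices $\Tilde{V}_t^l$.

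Next, conditioning on the event of Assumption~\ref{ass:gen-ucb}(e), I would bound the instantaneous regret $R_t$ exactly as in the RankUCB proof. Optimism and $\phi^l\in\Tilde{\Cc}_t^l$ give $\sum_l f^l(\langle\phi^l,\Tilde{x}_{a_\star}^l\rangle)\le\sum_l\mathrm{UCB}_t^l(a_\star^{l-1},a_\star^l)$; since $a_\star$ corresponds to a path in the same $L$-layered graph $G$ and $a_t$ is the longest such path under the current estimated weights, $\sum_l\mathrm{UCB}_t^l(a_\star^{l-1},a_\star^l)\le\sum_l\mathrm{UCB}_t^l(a_t^{l-1},a_t^l)=\sum_l f^l(\langle\Tilde{\phi}_t^l,\Tilde{x}_{a_t}^l\rangle)$, where $\Tilde{\phi}_t^l\in\Tilde{\Cc}_t^l$ attains the maximum in Equation~\ref{eq:gen-ucb}. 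Hence $R_t\le\sum_l(f^l(\langle\Tilde{\phi}_t^l,\Tilde{x}_{a_t}^l\rangle)-f^l(\langle\phi^l,\Tilde{x}_{a_t}^l\rangle))\le\sum_l c_2\langle\Tilde{\phi}_t^l-\phi^l,\Tilde{x}_{a_t}^l\rangle$ by the mean value theorem and Assumption~\ref{ass:genlin}. A Cauchy--Schwarz step in the $\Tilde{V}_{t-1}^l$-norm, then (ii), then the triangle inequality $\|g_{t-1}^l(\Tilde{\phi}_t^l)-g_{t-1}^l(\phi^l)\|_{\Tilde{V}_{t-1}^{l^{-1}}}\le L_{t-1}^l(\Tilde{\phi}_t^l)+L_{t-1}^l(\phi^l)\le2\sqrt{\beta_{t-1}^l}$ (using $\Tilde{\phi}_t^l,\phi^l\in\Tilde{\Cc}_{t-1}^l$), give $R_t\le\sum_l\frac{2c_2}{c_1}\sqrt{\beta_{t-1}^l}\,\|\Tilde{x}_{a_t}^l\|_{\Tilde{V}_{t-1}^{l^{-1}}}$; Assumption~\ref{ass:gen-ucb}(d) also gives $R_t\le2L$, so one combines the two via a $\min\{\cdot,\cdot\}$.

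Then I would collect the $2d$-dimensional bookkeeping and sum over $t$. From (i), $\|\phi^l\|_2^2=(1+w_l^2)\|\theta^l\|_2^2\le(1+m_3^2)m_2^2$ by Assumption~\ref{ass:gen-ucb}(b),(c), and $\det(\Tilde{V}_t^l)$ is nondecreasing in $t$, which together with the maximum over $l$ yields the stated $\sqrt{\beta_T}$; and $\|\Tilde{x}_{a_t}^l\|_2^2=\|v_{a_t^l}\|_2^2+\|v_{a_t^{l-1}}\|_2^2\le2m_1^2$ by Assumption~\ref{ass:gen-ucb}(a), so $M=\sqrt{2}\,m_1$ is admissible in Lemma~\ref{lem:ellep}. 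Taking $\Rr_T=\E\left[\sum_{t=1}^T R_t\right]$, bounding each $\sqrt{\beta_{t-1}^l}$ by $\sqrt{\beta_T}$, and applying Cauchy--Schwarz over the $LT$ pairs $(l,t)$ to pull out $\sqrt{LT}$ gives $\Rr_T\le\frac{2c_2}{c_1}\sqrt{LT\,\beta_T\sum_{l=1}^L\sum_{t=1}^T\min\{1,\|\Tilde{x}_{a_t}^l\|_{\Tilde{V}_{t-1}^{l^{-1}}}^2\}}$; finally, substituting Lemma~\ref{lem:ellep} with $\textup{tr}(\Tilde{V}_0^l)=2d\lambda$, $\det(\Tilde{V}_0^l)=\lambda^{2d}$, and $M=\sqrt{2}\,m_1$, and simplifying, yields the claimed $\Rr_T\le4\frac{c_2}{c_1}L\sqrt{dT\,\beta_T\log(1+\tfrac{2Tm_2^2}{d\lambda})}$.

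I do not expect a real obstacle: the argument is structurally the RankUCB proof, and the only point needing care is that passing from $\R^d$ to $\R^{2d}$ changes nothing — which it does not, since none of Lemmas~\ref{lem:genlinconf}--\ref{lem:ellep} nor the constants of Assumption~\ref{ass:genlin} depend on the dimension, and the $L$-layered graph together with its longest-path selection is literally the one in Algorithm~\ref{algo:ucb}, only with edge weights recomputed from the $2d$-dimensional confidence sets $\Tilde{\Cc}_t^l$. The only genuinely new input is the pair of elementary norm bounds $\|\phi^l\|_2\le m_2\sqrt{1+m_3^2}$ and $\|\Tilde{x}_{a_t}^l\|_2\le\sqrt{2}\,m_1$, together with the bookkeeping of the factors of $2$ (and the $2d$-versus-$d$ substitutions in the trace/determinant terms) that makes the leading constant $4c_2/c_1$ rather than $2\sqrt{2}c_2/c_1$.
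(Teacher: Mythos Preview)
Your proposal is correct and follows essentially the same route as the paper's own proof: reduce to the $2d$-dimensional generalized linear bandit, invoke optimism plus the longest-path property of the $L$-layered graph, bound $R_t$ via Cauchy--Schwarz, Lemma~\ref{lem:thetatog}, and the confidence radius, then sum using Cauchy--Schwarz and the elliptical potential lemma with $\textup{tr}(\Tilde{V}_0^l)=2d\lambda$, $\det(\Tilde{V}_0^l)=\lambda^{2d}$. The only discrepancy is cosmetic: you (correctly) compute $\|\Tilde{x}_{a_t}^l\|_2\le\sqrt{2}\,m_1$, whereas the paper plugs in $M=2m_2$ to obtain the $m_2^2$ appearing in the theorem's log term --- so your final display should read $m_1^2$ rather than $m_2^2$ unless you adopt the paper's (apparently typo'd) choice of $M$.
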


The upper bound provided in Theorem \ref{thm:gen-ucb} has a larger coefficient factor and is looser than the bound reported in Theorem \ref{thm:genlinucb}, which was predictable since there are more unknown parameters.

\begin{proof}
The proof is similar to the proof of Theorem \ref{thm:genlinucb} in Appendix \ref{app:ucbproof}. By Assumption \ref{ass:gen-ucb}, it is suffices to prove the bound on the event that for all $l \in [L]$, $\phi^l \in \Tilde{\Cc}_t^l$. Let $a_\star = \argmax_{a\in \A} \sum_{l=1}^L f^l\left(\langle \phi^l \; , \; x_a^l \rangle\right)$, where $x_a^l = (v_{a^l} \; \; v_{a^{l-1}})^{\textup{T}}$, and $R_t$ be the instantaneous total regret in round $t$. Then,
$$R_t = \sum_{l=1}^L f^l\left(\langle \phi^l \; , \; x_{a_\star}^l \rangle\right) - \sum_{l=1}^L f^l\left(\langle \phi^l \; , \; x_{a_t}^l \rangle\right).$$
For each $l \in [L]$, let $\Tilde{\phi}_t^l \in \Tilde{\Cc}_t^l$ be the parameter for which $f^l\left(\langle \Tilde{\phi}^l \; , \; x_{a_t}^l \rangle\right) = \mathrm{UCB}_t^l(a_t^{l-1}, a_t^l)$. Now, the fact that $\phi^l \in \Tilde{\Cc}_t^l$ and Equation \ref{eq:gen-ucb} lead us to the following:
\begin{equation} \label{eq:gen-ucb-eq}
    f^l\left(\langle \phi^l \; , \; x_{a_\star}^l \rangle\right) \leq \mathrm{UCB}_t^l(a_\star^{l-1}, a_\star^l).
\end{equation}
Using Equation \ref{eq:gen-ucb-eq} and the facts that $a_\star$ corresponds to a path in graph $G$ of Algorithm \ref{algo:gen-ucb}, and the longest path of graph $G$ at round $t$ has been $a_t$, we can write:
\begin{equation*}  
     \sum_{l=1}^L f^l\left(\langle \phi^l \; , \; x_{a_\star}^l \rangle\right) \leq \sum_{l=1}^L \mathrm{UCB}_t^l(a_\star^{l-1}, a_\star^l) \leq \sum_{l=1}^L \mathrm{UCB}_t^l(a_t^{l-1}, a_t^l) = \sum_{l=1}^L f^l\left(\langle \Tilde{\phi}^l \; , \; x_{a_t}^l \rangle\right).
\end{equation*}
Therefore,
\begin{align*}
    R_t &= \sum_{l=1}^L f^l\left(\langle \phi^l \; , \; x_{a_\star}^l \rangle\right) - \sum_{l=1}^L f^l\left(\langle \phi^l \; , \; x_{a_t}^l \rangle\right)\\
    &\leq \sum_{l=1}^L f^l\left(\langle \Tilde{\phi}_t^l \; , \; v_{a_t}^l \rangle\right) - \sum_{l=1}^L f^l\left(\langle \phi^l \; , \; x_{a_t}^l \rangle\right)\\
    &\leq \sum_{l=1}^L c_2 \langle \Tilde{\phi}_t^l - \phi^l \; , \; x_{a_t}^l \rangle \\
    &\leq \sum_{l=1}^L c_2 \|x_{a_t}^l\|_{\Tilde{V}_{t-1}^{l^{-1}}} \|\Tilde{\phi}_t^l - \phi^l\|_{\Tilde{V}_{t-1}^l}\\
    &\leq \sum_{l=1}^L c_2 \|x_{a_t}^l\|_{\Tilde{V}_{t-1}^{l^{-1}}} \frac{\|g_{t-1}^l\left(\Tilde{\phi}_t^l\right) - g_{t-1}^l\left(\phi^l\right)\|_{V_{t-1}^l}}{c_1}\\
    &\leq \sum_{l=1}^L \frac{c_2}{c_1} \|x_{a_t}^l\|_{V_{t-1}^{l^{-1}}} \left(L_{t-1}^l(\Tilde{\phi}^l) - L_{t-1}^l(\phi^l)\right)\\
    &\leq \sum_{l=1}^L \frac{2c_2}{c_1}\sqrt{\beta_{t-1}^l} \|x_{a_t}^l\|_{V_{t-1}^{l^{-1}}}\\
    &\leq \min\left\{2L\; , \; \sum_{l=1}^L \frac{2c_2}{c_1}\sqrt{\beta_{t-1}^l} \|x_{a_t}^l\|_{V_{t-1}^{l^{-1}}}\right\}.
\end{align*}

The lines are followed by the Cauchy-Schwartz inequality, Lemmas \ref{lem:genlinconf} and \ref{lem:thetatog}, and the last line is due to Assumption \ref{ass:gen-ucb}, which bounds $R_t \leq 2L$. Now, using Lemma \ref{lem:genlinconf} and Assumption \ref{ass:gen-ucb}, we have that:
$$ \sqrt{\beta_{t-1}^l} \leq \sqrt{\lambda} \left(m_2 \sqrt{1+m_3^2}\right) + \sqrt{2\log\left(\frac{1}{\delta}\right) + \log\left(\frac{\det\left(\Tilde{V}_{t-1}^l(\lambda)\right)}{\lambda^{2d}}\right)}.$$
Thus,
\begin{align} \label{eq:intialgenRegBoundUCB}
\begin{split}
    R_t &\leq 2 \frac{c_2}{c_1} \sum_{l=1}^L \left(\sqrt{\lambda} \left(m_2 \sqrt{1+m_3^2}\right) + \sqrt{2\log\left(\frac{1}{\delta}\right) + \log\left(\frac{\det\left(\Tilde{V}_{t-1}^l(\lambda)\right)}{\lambda^{2d}}\right)}\right) \min\left\{1\; , \; \|x_{a_t}^l\|_{\Tilde{V}_{t-1}^{l^{-1}}}\right\}.
\end{split}
\end{align}
Now, we can upper bound the expected regret $\Rr_T = \E\left[\sum_{t=1}^T R_t\right]$ by Equation \ref{eq:intialgenRegBoundUCB}. Noting that $\left \{\det\left(\Tilde{V}_t^l\right)\right \}_{t=1}^T$ is an increasing sequence, we can write:
\begin{align*}
    \Rr_T  &= \E\left[\sum_{t=1}^T R_t\right]\\
    & \leq 2\frac{c_2}{c_1}\sum_{t=1}^T \sum_{l=1}^L \sqrt{\beta_T} \min\left\{1\; , \; \|x_{a_t}^l\|_{\Tilde{V}_{t-1}^{l^{-1}}}\right\}\\
    & \leq 2\frac{c_2}{c_1}\sqrt{LT\sum_{l=1}^L \sum_{t=1}^T \beta_T \min\left\{1\; , \; \|x_{a_t}^l\|_{\Tilde{V}_{t-1}^{l^{-1}}}^2\right\}}.
\end{align*}
The last claim follows from Cauchy--Schwartz inequality. Using Lemma \ref{lem:ellep} and defining $b_t = x_{a_t}^l = (v_{a_t^l} \; \; v_{a_t^{l-1}})^{\textup{T}}$, and $M = 2 m_2$, we can write:
\begin{equation*}
    \Rr_T \leq 2\frac{c_2}{c_1}\sqrt{LT\beta_T \sum_{l=1}^L 4d\log\left(\frac{\textup{tr}(\Tilde{V}_0^l) + 4 T m_2^2}{2d \det\left(\Tilde{V}_0^l\right)^\frac{1}{2d}}\right)}.
\end{equation*}
By replacing $\textup{tr}(\Tilde{V}_0^l) = 2d\lambda$ and $\det(\Tilde{V}_0^l) = \lambda^{2d}$, we get the following bound:
\begin{equation*}
    \Rr_T \leq 4\frac{c_2}{c_1}L\sqrt{d T \beta_T \log\left(1+\frac{2 T m_2^2}{ d\lambda}\right)}.
\end{equation*}
This completes the proof.
\end{proof}

\section{Ranking Thompson Sampling Algorithm}\label{sec:ts}
Thompson Sampling (TS) \citep{TSmain} assumes there exists a prior distribution $\Q$ on the parameter $\theta \in \Theta$ of the conditional reward distribution $\p(\cdot|\theta)$. 
At each round $t$, the algorithm draws a sample from the posterior distribution $\hat{\theta}_t \sim \Q(\cdot|\his_t$), selects the best action according to the sample, and updates the distribution based on the observed reward. However, the computation of the posterior becomes complicated when the conjugacy condition does not apply to these distributions, namely when the reward distribution is not conjugate to the distribution over $\theta$. Recent papers \citep{ding2021efficient, kim2022double} have attempted to address this issue using different techniques. Nevertheless, the posterior distribution might be difficult or expensive to sample, even in the conjugate scenario.

This section will present an overview of the influence of the $L$-layered graph on the linear case to avoid the computation complexity caused by conjugacy. 
We assume that each $\theta^l$ is sampled independently from a prior distribution $\Q^l$, and we will update their posterior distributions separately. The prior distribution $\Q^l$ for different $l$ can be different, i.e., the samples are not necessarily identically distributed. Also, note that for finding the best action according to the samples $\hat{\theta}_t^l$ for $l \in [L]$, we use the $L$-layering graph technique. In other words, we use the samples of the vector $\hat{\theta} = (\hat{\theta}^1, \ldots, \hat{\theta}^L)$ to estimate the weights of each edge $e$ in the $L$-layered graph $G$ over super-arms of set $A$, and find the longest path in the graph as the best action for round $t$. Thus, the estimated weight of $\hat{c}_e$, where $e$ is the edge from $u_{ij}^l$ to $u_{jq}^{l+1}$ would be defined as follows:
\begin{equation}\label{eq:TS_weight}
    \textstyle \hat{c}_e = \begin{cases}
    \begin{array}{@{}l} 
\textstyle
\phantom{{}-}
\frac{1}{2} (2 \langle \hat{\theta}^1 \; , \; v_j + w_1 v_0 \rangle \\\qquad{}+ \langle \hat{\theta}^2 \; , \; v_q + w_2 v_j \rangle)\end{array}&\text{ if $l = 1$;}\\ 
\begin{array}{@{}l} 
\textstyle
\phantom{{}-}
\frac{1}{2}(\langle \hat{\theta}^{L-1} \; , \; v_j + w_{L-1} v_i \rangle \\\qquad{}+ 2\langle \hat{\theta}^L \; , \; v_q + w_L v_j \rangle)\end{array}&\text{ if $l = L-1$;}\\ 
\begin{array}{@{}l} 
\textstyle
\phantom{{}-}
\frac{1}{2}(\langle \hat{\theta}^{l} \; , \; v_j + w_l v_i \rangle \\\qquad{}+ \langle \hat{\theta}^{l+1} \; , \; v_q + w_{l+1} v_j \rangle) \end{array}&\text{ otherwise.}\end{cases}
\end{equation}
The final adaptation of TS algorithm, RankTS, is described in Algorithm \ref{algo:TS}.

\begin{algorithm}
\caption{RankTS}\label{algo:TS}
\begin{algorithmic}[1]
\STATE \textbf{Input:}  $L$, prior distributions $\{\Q^l\}_{l=1}^L$, $\{w_l\}_{l\leq L}$, $T$, arm set $A = \{1, \ldots, K\}$, and vector $v_0$
\STATE Create $L$-layered graph $G = \bigcup_{i=1}^K G_i$ over super-arms of set $A$
\STATE Initialization: For any edge $e$ of $G$, set $\hat{c}_e = 0$
\FOR{$t = 1, 2, \ldots, T$}
    \STATE $(\hat{\theta}^1, \ldots, \hat{\theta}^L) \sim \Q^1(\cdot|\his_t) \otimes \ldots \otimes \Q^L(\cdot|\his_t)$
    \STATE Update $\hat{c}_e$, for any edge $e$, based on Equation \ref{eq:TS_weight}
    \STATE Obtain $p_i \gets \mathrm{ShortestPathAlgorithm}(-G_i)$ for all $i \in [K]$ simultaneously
    \STATE $p_\star \gets \argmin_{p_i} \sum_{e \in p_i} \hat{c}_e$
    \STATE Choose action $a_t$ as the ordered vertices of path $p_\star$
    \STATE Play $a_t$ and observe $r_{a_t}^l$ for $l \in [L]$
    \STATE $\his_{t+1} \gets \his_{t} \cup \{a_t, (r_{a_t}^1, \ldots, r_{a_t}^L)\}$
    \STATE Update $\Q^l(\cdot|\his_{t+1})$ for $l \in L$
\ENDFOR
\end{algorithmic}
\end{algorithm}

The first result providing an upper bound for TS with linear reward functions was obtained in \cite{TSproofAgarwal}. Then, \cite{TSproofRevised} presented a new proof, which can also be applied to generalized or regularized linear models. Our upper bound for RankTS borrows the techniques from these two papers. We first need the following assumption to state the main theorem:

\begin{assumption}\label{ass:ts}
For some $m_1, m_2 > 0$, the following hold: (a) for any arm $i \in A$, $\|v_i\|_2 \leq m_1$, (b) for all $l \in L$, $\|\theta^l\|_2 \leq m_2$ with $\Q^l$-probability one, (c) $\sup_{l \in [L]}\sup_{a \in \R^d}|\langle \theta^l \; , \; a \rangle| \leq 1$.
\end{assumption}

Now, we have the following theorem:

\begin{theorem}\label{thm:ts} Under Assumption \ref{ass:ts}, the expected regret of the RankTS algorithm is bounded by:
\begin{equation}\label{eq:ts}
\begin{split}
    \textstyle \Rr_T \leq &2L\big(1+ \sqrt{2Td\beta^2\log\Big(1+
    \tfrac{T\left((1 + \max_{l\in [L]} |w_l|)m_1\right)^2}{d\lambda}\Big)}\big)
\end{split}
\end{equation}
where $$\textstyle \beta = 1 + \sqrt{4\log(T) + d\log\left(1+\frac{T\left((1 + \max_{l\in [L]} |w_l|)m_1\right)^2}{ d\lambda}\right)}.$$
\end{theorem}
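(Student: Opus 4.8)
The plan is to adapt the standard linear-Thompson-sampling analysis of \cite{TSproofAgarwal,TSproofRevised} to the $L$-layered graph setting, exploiting the fact that RankTS decouples the problem into $L$ independent linear bandits—one per position—whose instantaneous regrets add up. First I would fix notation: for each position $l$ write $x_a^l = v_{a^l}+w_l v_{a^{l-1}}$, $\hat\theta_t^l\sim\Q^l(\cdot\mid\his_t)$, and let $\hat\theta_t^l$ denote the Bayesian least-squares estimate with design matrix $V_{t-1}^l=\lambda I+\sum_{s<t}x_{a_s}^l(x_{a_s}^l)^{\textup{T}}$. The quantity $\beta$ in the statement is exactly the radius (up to the confidence level $1/T^2$ absorbed into the $\log T$ term) of the ellipsoidal confidence set guaranteed by Lemma~\ref{lem:genlinconf} applied with $\delta = 1/T^2$ and the dimension-$d$ bound $\|x_a^l\|_2\le (1+\max_l|w_l|)m_1$; so the first step is to invoke Lemma~\ref{lem:genlinconf} to get, on an event $\mathcal{E}$ of probability at least $1-1/T$, that $\|\hat\theta_t^l-\theta^l\|_{V_{t-1}^l}\le\beta$ for all $t,l$, and to argue the analogous concentration $\|\hat\theta_t^l-\hat\theta_t^l\|_{V_{t-1}^l}\le\beta$ for the posterior sample (this is the part that requires either Gaussian-posterior structure or the ``concentration of the sampled parameter'' lemma of \cite{TSproofRevised}).

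The core of the argument is the usual optimism-in-expectation / saturated-vs-unsaturated-arms decomposition. For the instantaneous regret $R_t = \sum_{l=1}^L\big(\langle\theta^l,x_{a_\star}^l\rangle-\langle\theta^l,x_{a_t}^l\rangle\big)$, I would define, for each position, the ``saturated'' actions to be those whose per-position standard deviation $\|x_a^l\|_{V_{t-1}^{l^{-1}}}$ is small relative to their optimality gap, and use the fact that the longest-path action chosen by RankTS maximizes $\sum_l\langle\hat\theta_t^l, x_{\cdot}^l\rangle$ over all paths (equivalently over all $a\in\A$, since every $L$-vertex path is a valid action, as established in Section~\ref{sec:graph}). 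Because the graph optimization is exact and the objective is additive over $l$, the standard one-dimensional-per-position bound $R_t\le \text{const}\cdot\beta\sum_{l=1}^L\|x_{a_t}^l\|_{V_{t-1}^{l^{-1}}}$ carries over essentially verbatim—the anti-concentration of the posterior sample in the direction of the gradient guarantees that with constant probability the sampled action is optimistic, and the bounded-difference step converts this into a bound on the expected regret. Then summing over $t$, applying Cauchy–Schwarz across both $t$ and $l$ exactly as in the proof of Theorem~\ref{thm:genlinucb}, and invoking the elliptical potential Lemma~\ref{lem:ellep} with $M=(1+\max_l|w_l|)m_1$ for each of the $L$ matrices $V_T^l$, produces the $\sqrt{2Td\beta^2\log(1+T((1+\max_l|w_l|)m_1)^2/(d\lambda))}$ factor; the outer factor $L$ comes from the sum over positions, and the additive $2L$ absorbs the low-probability complement of $\mathcal{E}$ together with the boundary/burn-in rounds where $R_t\le 2$ is used directly (by Assumption~\ref{ass:ts}(c) each $|\langle\theta^l,x_a^l\rangle|$ contribution is bounded, so $R_t\le 2L$).

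The main obstacle I expect is the posterior anti-concentration step: unlike the UCB proof, TS needs a \emph{lower} bound on the probability that the sampled $\hat\theta_t^l$ is optimistic along the relevant direction, and this requires the posterior $\Q^l(\cdot\mid\his_t)$ to have enough spread—which is clean for Gaussian priors/likelihoods but must be stated carefully (and is why the theorem is phrased for the linear case only). A secondary technical point is that the per-position confidence sets and posteriors are coupled only through the shared action $a_t$ (the same path feeds all $L$ positions), so one must verify that the filtration argument still goes through when the $L$ martingales are defined on the same probability space; this is fine because the $\eta_t^l$ are independent across $l$ and the selection rule is $\his_t$-measurable, so each position's analysis can be run conditionally and then union-bounded, at the cost of the $\delta=1/T^2$ choice that feeds into $\beta$. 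Everything else—the Cauchy–Schwarz chaining and the elliptical potential bound—is routine given Lemmas~\ref{lem:genlinconf}–\ref{lem:ellep}.
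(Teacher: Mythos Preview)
Your plan follows the \emph{frequentist} Thompson-sampling analysis (saturated/unsaturated arms, posterior anti-concentration), whereas the paper's proof is a pure \emph{Bayesian-regret} argument in the style of Russo and Van~Roy. In the paper $\theta^l$ is itself random (drawn from $\Q^l$), so conditionally on $\his_t$ the true parameter $\theta^l$ and the posterior sample $\hat\theta_t^l$ are identically distributed; consequently $a_\star$ and $a_t$ have the same law given $\his_t$, and for any $\his_t$-measurable function $U_t^l$ one has $\E[U_t^l(a_\star^{l-1},a_\star^l)\mid\his_t]=\E[U_t^l(a_t^{l-1},a_t^l)\mid\his_t]$. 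Introducing the bridge $U_t^l(i,j)=\langle\hat\theta_{t-1}^l,v_j+w_l v_i\rangle+\beta\|v_j+w_l v_i\|_{V_{t-1}^{l^{-1}}}$, the per-round regret on the good event telescopes directly to $2\beta\sum_l\|x_{a_t}^l\|_{V_{t-1}^{l^{-1}}}$, after which Cauchy--Schwarz and Lemma~\ref{lem:ellep} finish exactly as you describe. The anti-concentration step you flag as ``the main obstacle'' is therefore entirely absent from the paper's argument: no lower bound on the probability of optimism is needed, and no Gaussian-posterior structure is invoked. Your route would also go through (and is what one would need for a worst-case rather than Bayesian guarantee), but it is heavier machinery and typically picks up an extra multiplicative constant from the inverse anti-concentration probability, so it would not recover the clean leading factor $2L$ in the stated bound.
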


We need the following corollary of Lemma \ref{lem:ellep} to prove Theorem \ref{thm:ts}.

\begin{cor}
Let $V_0 = \lambda I \in \R^{d\times d}$, and $b_1, \ldots, b_T \in \R^d$ be a sequence of vectors with $\|b_t\|_2 \leq M < \infty$. For all $t \in [T]$, define $V_t = V_0 + \sum_{s\leq t}b_s b_s^{\textup{T}}$. Then,
$$\frac{\det\left(V_t(\lambda)\right)}{\lambda^d} \leq \left(\textup{tr}\left(\frac{V_t(\lambda)}{\lambda d}\right)^d\right) \leq \left(1 + \frac{TM^2}{\lambda d}\right)^2.$$
\end{cor}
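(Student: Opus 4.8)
The plan is to read the statement directly off the chain of inequalities already established inside the proof of Lemma~\ref{lem:ellep}, now specialized to $V_0=\lambda I$. First I would recall the purely algebraic step used there: writing $\lambda_1,\dots,\lambda_d$ for the eigenvalues of the symmetric positive definite matrix $V_t$, the arithmetic--geometric mean inequality gives $\det(V_t)=\prod_{i=1}^d\lambda_i\le\bigl(\tfrac1d\sum_{i=1}^d\lambda_i\bigr)^d=\bigl(\tfrac1d\,\textup{tr}(V_t)\bigr)^d$. Dividing both sides by $\lambda^d$ and using $\textup{tr}(V_t)/(\lambda d)=\textup{tr}\bigl(V_t/(\lambda d)\bigr)$ yields the first of the two inequalities.

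For the second inequality I would control the trace by linearity: since $\textup{tr}(b_sb_s^{\textup{T}})=\|b_s\|_2^2\le M^2$ for each $s$ and $\textup{tr}(V_0)=\textup{tr}(\lambda I)=d\lambda$, we obtain $\textup{tr}(V_t)=d\lambda+\sum_{s=1}^t\|b_s\|_2^2\le d\lambda+tM^2\le d\lambda+TM^2$ for every $t\in[T]$. Substituting this into the first inequality gives $\det(V_t)/\lambda^d\le\bigl((d\lambda+TM^2)/(\lambda d)\bigr)^d=\bigl(1+TM^2/(\lambda d)\bigr)^d$, i.e.\ the asserted bound (with the outer exponent on the right-hand side read as $d$, and the middle term $\textup{tr}((V_t/(\lambda d))^d)$ read as $(\textup{tr}(V_t/(\lambda d)))^d$, which is the form in which these quantities are used downstream in Theorem~\ref{thm:ts}).

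I do not expect a genuine obstacle here: the corollary is just a repackaging of computations already carried out for Lemma~\ref{lem:ellep}, combined with the particular values $\textup{tr}(V_0)=d\lambda$ and $\det(V_0)^{1/d}=\lambda$. The one point worth a sentence of care is that the bound is meant to hold uniformly over $t\le T$, which is precisely why the crude replacement $tM^2\le TM^2$ is made rather than keeping the sharper $t$-dependent trace estimate; one may also note in passing that $\{\det(V_t)\}_{t\le T}$ is nondecreasing, so it in fact suffices to verify the inequality at $t=T$. The final write-up would therefore amount to the two displays above, prefaced by the remark that they are immediate from the proof of Lemma~\ref{lem:ellep}.
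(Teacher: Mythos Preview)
Your proposal is correct and is precisely the argument the paper intends: the corollary is stated without proof in the paper as an immediate consequence of the AM--GM step already carried out in the proof of Lemma~\ref{lem:ellep}, specialized to $V_0=\lambda I$, and your two displays reproduce that step exactly. Your parenthetical remark identifying the apparent typos (the outer exponent on the right-hand side should be $d$ rather than $2$, and the middle term should read $(\textup{tr}(V_t/(\lambda d)))^d$) is also correct and consistent with how the bound is used downstream.
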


We can now give the proof of Theorem \ref{thm:ts}.

\begin{proof}
Let us denote the set of the super-arms of set $A$ by $\Sc(A)$. We start by defining upper confidence bound functions $U_t^l: \Sc(A) \mapsto \R$ for all $l \in [L]$ as follows:
$$U_t^l(i,j) = \langle \hat{\theta}_{t-1}^l\; , \; v_j + w_l v_i\rangle + \beta \|v_j + w_l v_i\|_{V_{t-1}^{l^{-1}}}$$
where $V_t^l = \frac{1}{m_2^2}I + \sum_{s=1}^t (v_{a_s^l} + w_l v_{a_s^{l-1}})(v_{a_s^l} + w_l v_{a_s^{l-1}})^{\textup{T}}$. By Lemma \ref{lem:genlinconf} and Lemma \ref{lem:ellep}, and setting $\lambda = \frac{1}{m_2^2}$ and $\delta = \frac{1}{T^2}$, we have that $\Pp(\exists t\in [T]: \|\hat{\theta}_{t-1}^l - \theta^l\|_{V_{t-1}^l} > \beta) \leq \frac{1}{T^2}$. Let $E_t^l$ be the event that $\|\hat{\theta}_{t-1}^l - \theta^l\|_{V_{t-1}^l} \leq \beta$, and define $E^l = \bigcap_{t=1}^T E_t$, $E = \bigcap_{l=1}^L E^l$, and $a_\star = \argmax_{a\in \A} \sum_{l=1}^L \langle\theta^l\; , \; a^l + w_l a^{l-1} \rangle$. Since $\{\theta^l\}_{l=1}^L$ are random, $a_\star$ is a random variable. Now, we can write the regret as follows:
\begin{align}\label{eq:decomp}
    \begin{split}
        \Rr_T &= \E\left[\sum_{t=1}^T \sum_{l=1}^L \langle\theta^l \; , \; a_\star^l - a_t^l + w_l (a_\star^{l-1} - a_t^{l-1})\rangle\right]\\
        &= \E\left[\one_{E}\sum_{t=1}^T \sum_{l=1}^L \langle\theta^l \; , \; a_\star^l - a_t^l + w_l (a_\star^{l-1} - a_t^{l-1})\rangle\right] \\
        & \; \; \; \; + \E\left[\one_{E^c}\sum_{t=1}^T \sum_{l=1}^L \langle\theta^l \; , \; a_\star^l - a_t^l + w_l (a_\star^{l-1} - a_t^{l-1})\rangle\right].
    \end{split}
\end{align}
Here $\one_{E}$ is the indicator function of event $E$. Now, for the second term which is on the event $E^c$, we can bound the term inside the expectation based on Assumption \ref{ass:ts}:
\begin{align*}
    & \; \; \; \; \; \E\left[\one_{E^c}\sum_{t=1}^T \sum_{l=1}^L \langle\theta^l \; , \; a_\star^l - a_t^l + w_l (a_\star^{l-1} - a_t^{l-1})\rangle\right] \\
    &= \sum_{l=1}^L \E\left[\one_{{E^l}^c}\sum_{t=1}^T \langle\theta^l \; , \; a_\star^l - a_t^l + w_l (a_\star^{l-1} - a_t^{l-1})\rangle\right]\\
    &\leq 2T(1+\max_{l\in[l]}|w_l|)\sum_{l=1}^L \Pp({E^l}^c).
\end{align*}
The first line is due to the fact that events $E^l$ for any $l \in [L]$ are independent because in Algorithm \ref{algo:TS} we have that $(\hat{\theta}^1, \ldots, \hat{\theta}^L) \sim \Q^1(\cdot|\his_t) \otimes \ldots \otimes \Q^L(\cdot|\his_t)$. Now, for $\Pp({E^l}^c)$ we have that:
\begin{align*}
    \Pp({E^l}^c) = \Pp(\bigcup_{t=1}^T {E_t^l}^c) \leq \sum_{t=1}^T \Pp({E_t^l}^c) \leq T\frac{1}{T^2} = \frac{1}{T}.
\end{align*}
Therefore, the second term of Equation \ref{eq:decomp} is bounded by $2L(1+\max_{l\in[l]}|w_l|)$. Now, for the first term, we can write:
\begin{align}\label{eq:secondRHSts}
    \begin{split}
        & \; \; \; \; \; \E\left[\one_{E}\sum_{t=1}^T \sum_{l=1}^L \langle\theta^l \; , \; a_\star^l - a_t^l + w_l (a_\star^{l-1} - a_t^{l-1})\rangle\right] \\
        &\leq \E\left[\sum_{t=1}^T \sum_{l=1}^L \one_{E_t^l} \langle\theta^l \; , \; a_\star^l - a_t^l + w_l (a_\star^{l-1} - a_t^{l-1})\rangle\right]\\
        &= \E\left[\sum_{t=1}^T \sum_{l=1}^L \E\left[\one_{E_t^l} \langle\theta^l \; , \; a_\star^l - a_t^l + w_l (a_\star^{l-1} - a_t^{l-1})\rangle|\his_t\right]\right].
    \end{split}
\end{align}
To bound this, note that for any $l \in [L]$ both $\theta^l$ and $\hat{\theta}_t^l$ are drawn from the same prior, which basically means that $\Pp(\theta^l\in \cdot|\his_t) = \Pp(\hat{\theta}_t^l \in \cdot|\his_t)$. Hence, we can conclude that $\Pp(a_\star = \cdot|\his_t) = \Pp(a_t = \cdot|\his_t)$ and $\E\left[U_t^l(a_\star^{l-1}, a_\star^l)|\his_t\right] = \E\left[U_t^l(a_t^{l-1}, a_t^l)|\his_t\right]$. Thus,
\begin{align*}
    \E\left[\one_{E_t^l} \langle\theta^l \; , \; a_\star^l - a_t^l + w_l (a_\star^{l-1} - a_t^{l-1})\rangle|\his_t\right] &= \one_{E_t^l} \E\left[ \langle \theta^l \; , \; v_{a_\star^l} + w_l v_{a_\star^{l-1}} \rangle - U_t^l(a_\star^{l-1}, a_\star^l)\right] \\
    & \;  \; \; \; + \one_{E_t^l} \E\left[ U_t^l(a_t^{l-1}, a_t^l) - \langle \theta^l \; , \; v_{a_t^l} + w_l v_{a_t^{l-1}} \rangle\right]\\
    & \leq \one_{E_t^l} \E\left[ U_t^l(a_t^{l-1}, a_t^l) - \langle \theta^l \; , \; v_{a_t^l} + w_l v_{a_t^{l-1}} \rangle\right]\\
    &\leq \one_{E_t^l} \E\left[ \langle \hat{\theta}_{t-1}^l - \theta^l \; , \; v_{a_t^l} + w_l v_{a_t^{l-1}} \rangle\right] \\
    & \; \; \; \; + \beta \| v_{a_t^l} + w_l v_{a_t^{l-1}}\|_{V_{t-1}^{l^{-1}}}\\
    &\leq \one_{E_t^l} \E\left[\|\hat{\theta}_{t-1}^l - \theta^l\|_{V_{t-1}^l} \|v_{a_t^l} + w_l v_{a_t^{l-1}}\|_{V_{t-1}^{l^{-1}}}\right] \\
    &\;  \; \; \; + \beta \| v_{a_t^l} + w_l v_{a_t^{l-1}}\|_{V_{t-1}^{l^{-1}}}\\
    &\leq 2\beta \| v_{a_t^l} + w_l v_{a_t^{l-1}}\|_{V_{t-1}^{l^{-1}}}.
\end{align*}
The second line is due to the fact that, by the definition of $U_t^l$ functions, the first term of the first line is negative or zero. Now, we can bound the Equation \ref{eq:secondRHSts} by noting that according to Assumption \ref{ass:ts}, $\one_{E_t^l} \langle\theta^l \; , \; a_\star^l - a_t^l + w_l (a_\star^{l-1} - a_t^{l-1})\rangle \leq 2$. Therefore, we have:
\begin{align*}
    \E\left[\one_{E}\sum_{t=1}^T \sum_{l=1}^L \langle\theta^l \; , \; a_\star^l - a_t^l + w_l (a_\star^{l-1} - a_t^{l-1})\rangle\right] &\leq 2\beta \E\left[\sum_{t=1}^T \sum_{l=1}^L \min\left\{1, \| v_{a_t^l} + w_l v_{a_t^{l-1}}\|_{V_{t-1}^{l^{-1}}}\right\}\right].
\end{align*}
Using Cauchy--Schwartz inequality and Lemma \ref{lem:ellep}, we will have:
\begin{align*}
    \E\left[\sum_{t=1}^T \sum_{l=1}^L \min\left\{1, \| v_{a_t^l} + w_l v_{a_t^{l-1}}\|_{V_{t-1}^{l^{-1}}}\right\}\right] &\leq \sqrt{LT\E\left[\sum_{t=1}^T \sum_{l=1}^L \min\left\{1, \| v_{a_t^l} + w_l v_{a_t^{l-1}}\|_{V_{t-1}^{l^{-1}}}^2\right\}\right]}\\
    &\leq \sqrt{LT\sum_{l=1}^L 2d\log\left(1+\frac{T\left((1 + \max_{l\in [L]} |w_l|)m_1\right)^2}{d\lambda}\right)}\\
    &= L\sqrt{2Td\log\left(1+\frac{T\left((1 + \max_{l\in [L]} |w_l|)m_1\right)^2}{d\lambda}\right)}.
\end{align*}
By substituting all the above bounds to Equation \ref{eq:decomp}, we get the following bound and the proof is complete.
\begin{align*}
    \Rr_T \leq 2L\left(1+ \beta\sqrt{2Td\log\left(1+\frac{T\left((1 + \max_{l\in [L]} |w_l|)m_1\right)^2}{d\lambda}\right)}\right).
\end{align*}
\end{proof}

The upper bound obtained for RankTS matches the upper bound obtained by RankUCB, which is consistent with previous results on TS and UCB. As explained earlier, implementation of RankTS needs to sample from the posterior, which is not straightforward for some priors and might need numerical methods such as Markov chain Monte Carlo \citep{andrieu2003introduction} or variational inference \citep{wainwright2008graphical}. Having sampled $\theta^l$, finding the best action requires solving a linear optimization problem. By comparison, RankUCB needs to solve \ref{eq:ucb}, which can be intractable for large or continuous action sets.

\section{More Details On Experiments}\label{app:exp}
\begin{figure}[ht]
\centering
\begin{subfigure}{.5\textwidth}
  \centering
  \includegraphics[width=.9\linewidth]{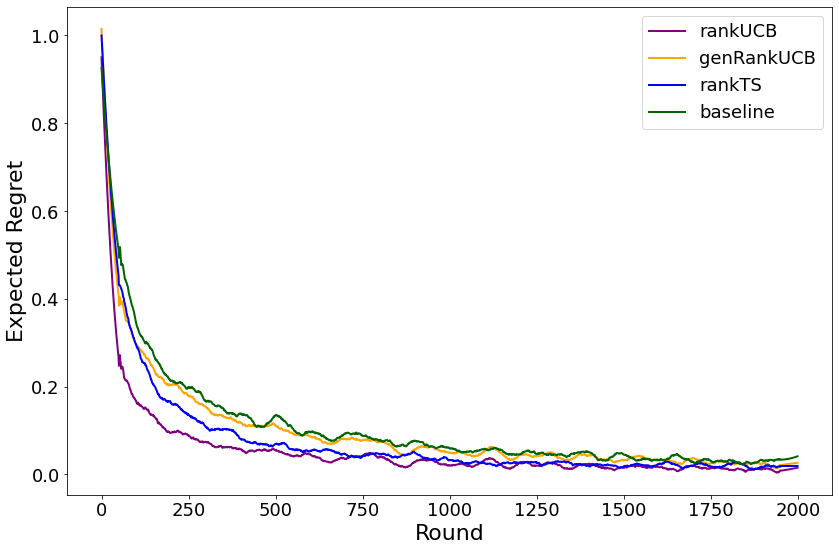}
  \caption{$\max_{l \in [L]} |w_l| = 0.1$, $K = 10$}
  \label{fig:sub1-2}
\end{subfigure}%
\begin{subfigure}{.5\textwidth}
  \centering
  \includegraphics[width=.9\linewidth]{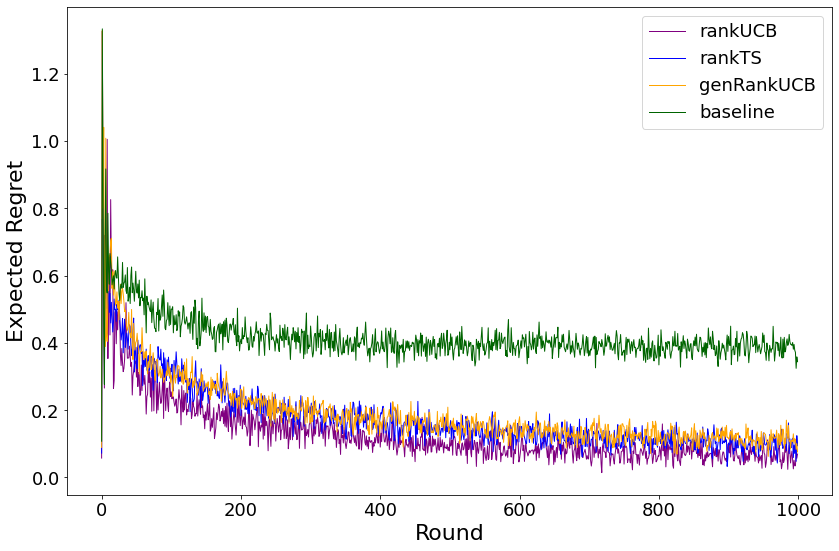}
  \caption{$\max_{l \in [L]} |w_l| = 10$, $K = 10$}
  \label{fig:sub2-2}
\end{subfigure}
\newline
\begin{subfigure}{.5\textwidth}
  \centering
  \includegraphics[width=.9\linewidth]{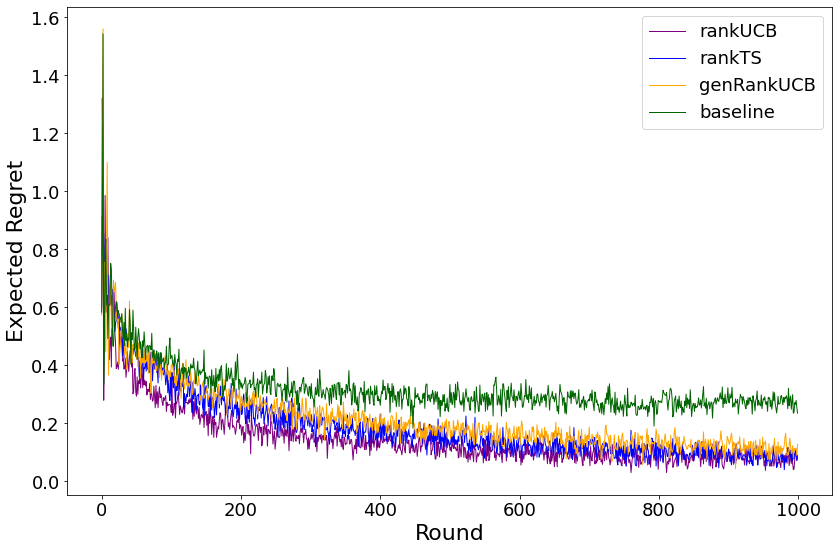}
  \caption{$\max_{l \in [L]} |w_l| = 0.5$, $K = 100$}
  \label{fig:sub3}
\end{subfigure}%
\caption{Expected Regret for $d = 10$, and $L = 4$.}
\label{fig:app}
\end{figure}

\begin{table}[t]
\centering
\begin{tabular}{lcc}
    \toprule[1pt]
    Algorithm & $K$ & ART (ms)\\
    \midrule
    baseline & $10$ & 8.43\\
    baseline & $100$ & 730.88 \\
    \midrule 
    RankUCB & $10$ & 8.85\\
    RankUCB & $100$ & 780.02\\
    \midrule 
    RankTS & $10$ & 8.02\\
    RankTS & $100$ & 729.57\\
    \midrule 
    genRankUCB & $10$ & 9.30\\
    genRankUCB & $100$ & 801.89\\
    \bottomrule[1pt]
\end{tabular}
\caption{Average response-time (ART) for $d = 10$, $L = 4$, $T = 1e4$ and $100$ runs}
\label{app:table}
\end{table}

We conduct additional experiments to compare the algorithms. In Figure \ref{fig:app}, the expected regret for all four algorithms under different initial parameters is shown. As it was discussed in Section \ref{sec:exp}, all algorithms perform well when $\max_{l \in [L]} |w_l|$ is close to zero. This can be seen in Figures \ref{fig:main} (Left figure) and \ref{fig:sub1-2}. However, the baseline algorithm cannot capture the true behavior of the optimal action when $|w_l|$ becomes larger. Even in relatively small $\max_{l \in [L]} |w_l|$ like Figure \ref{fig:sub3}, the baseline algorithm converges to a non-optimal action. In contrast, the other three algorithms proposed in this paper follow the optimal regret. Note that the regret at each time step $t$ is averaged over $100$ runs.\\
The Figures \ref{fig:main} and \ref{fig:app} are using the multivariate normal distribution as prior in RankTS, i.e.\ sampling $\hat{\theta}_t^l \sim \N(\mu_{t-1}^l, \Sigma_{t-1}^l)$ for each $\hat{\theta}^l$ separately. We assume the noise is Gaussian as well; therefore, the parameters of the normal distribution for the posterior can easily be updated by the following equations:
\begin{align*}
    & \mu_{t-1}^l = {\Sigma_{t-1}^{l^{-1}}} \left[ \sum_{s = 1}^t r_{a_s}^l (v_{a_s}^l + w_l v_{a_s}^{l-1})\right],\\
    & \Sigma_t^l = \Sigma_{t-1}^l + (v_{a_t^l} + w_l v_{a_t^{l-1}})(v_{a_t^l} + w_l v_{a_t^{l-1}})^{\textup{T}}.
\end{align*}
It is noteworthy to mention that Thompson Sampling heavily relies on the prior distribution; a poor prior may prevent an arm from being played enough times, leading to linear regret. A detailed study of prior sensitivity is out of scope of this work\footnote{See for instance \cite{liu2015prior}.}. In addition, it can be challenging to find a practical example of this sensitivity.

The algorithms are straightforward to implement. Since the majority of computations in UCB and TS are matrix multiplications, they are quite fast. However, when $K$ is high, the shortest path algorithm becomes very slow. This is because shortest path algorithms are not optimized for a specific graph structure, which in our case is the $L$-layered graph. Changing to shortest path algorithms for sparse graphs, however, may speed up the process. An analysis of the run times of algorithms can be found in Table \ref{app:table}. Moreover, multiprocessing can be used to improve the run time of finding the shortest path to each induced subgraph of $G_i$, as defined in Section \ref{sec:graph}. The specifications of the system that generated the data for Table \ref{app:table} are AMD Ryzen $5$ $5600$x @ $3.7$GHz. The importance of contextual bandit algorithms for practical applications such as recommendation systems and online advertising services makes the theoretical and practical investigation of the shortest path optimization problem essential. Applying algorithms with the shortest possible run time can mitigate negative societal impacts in the systems mentioned earlier.

Furthermore, it would be interesting to explore the robustness of these algorithms. Considering a small deviation from the main assumptions, such as non-subgaussianity of the noise, how well the algorithms would perform in finding the optimal action. For this case, we assume that the noise is sampled from a Laplace distribution. In this case, we have the following changes:
\begin{align*}
    &r_{a}^l = \langle \theta^l\; , \; v_{a^l} + w_l v_{{a}^{l-1}} \rangle + \eta^l + \epsilon \Tilde{\eta}^l\\
    &\text{where } \Tilde{\eta}^l \sim \text{Laplace}(0, 1)
\end{align*}
Here, $\eta^l$ is a subgaussian noise that matches the main assumptions, and $\Tilde{\eta}^l$ is a sample from a zero-mean Laplace distribution with scale $1$. Figure \ref{fig:robustness2} shows the results. All algorithms can come close to the optimal action when the scale of perturbation is relatively small. However, a large perturbation scale might result in a potentially non-optimal action, resulting in linear regrets. The interesting point is that adding some perturbations seems to help the algorithms to converge faster, like Figure \ref{fig:sub2-r2}.

\begin{figure}[ht]
\centering
\begin{subfigure}{.5\textwidth}
  \centering
  \includegraphics[width=.9\linewidth]{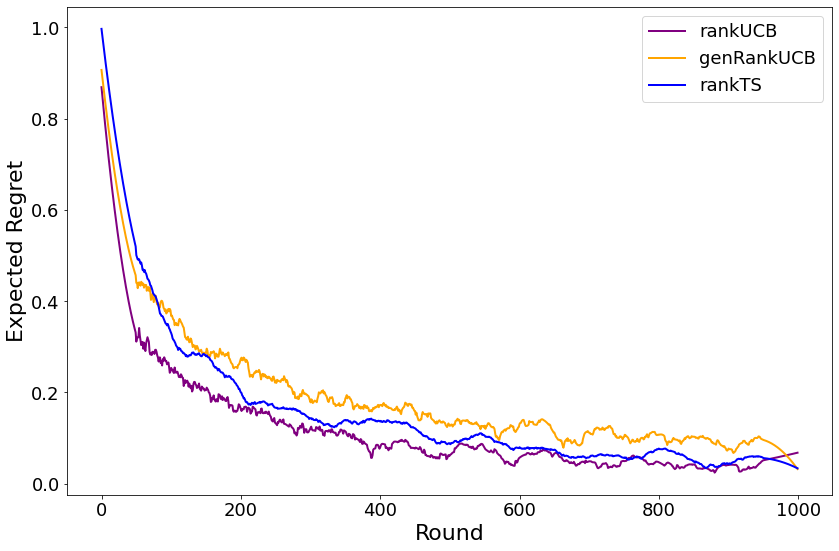}
  \caption{$\epsilon = 1\mathrm{e}{-5}$}
  \label{fig:sub1-r2}
\end{subfigure}%
\begin{subfigure}{.5\textwidth}
  \centering
  \includegraphics[width=.9\linewidth]{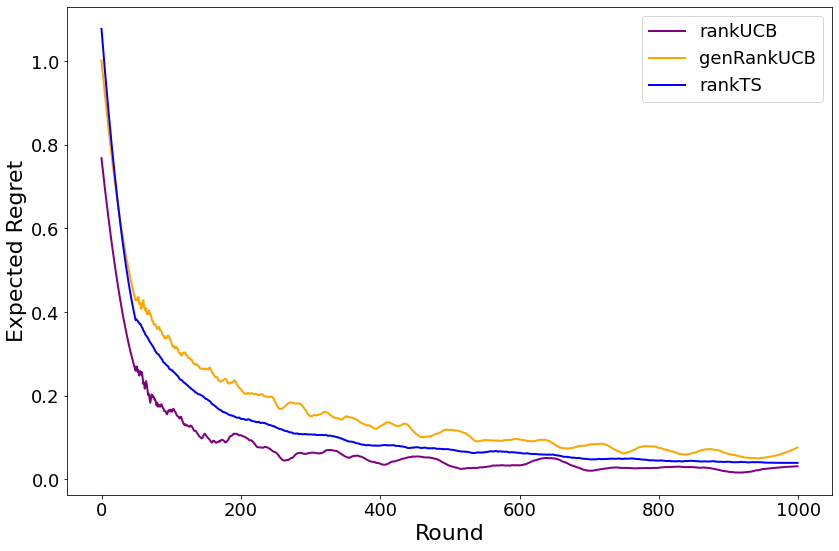}
  \caption{$\epsilon = 0.1$}
  \label{fig:sub2-r2}
\end{subfigure}%
\newline
\begin{subfigure}{.5\textwidth}
  \centering
  \includegraphics[width=.9\linewidth]{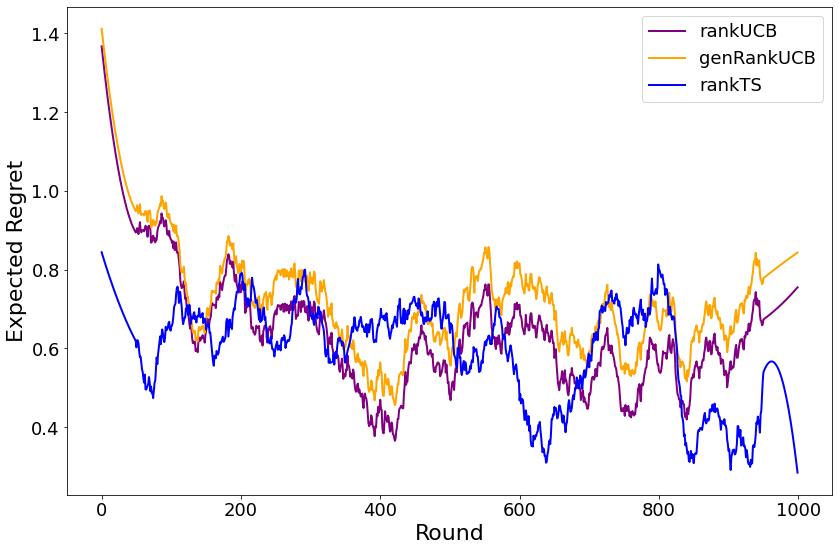}
  \caption{$\epsilon = 3$}
  \label{fig:sub3-r2}
\end{subfigure}%
\caption{Robustness of Algorithms in Presence of Non-Subgaussian Noise. $\max_{l \in [L]} |w_l| = 1$, and $K = 10$}
\label{fig:robustness2}
\end{figure}

The codes to reproduce the result are available at \url{https://github.com/shidani/rankingcontextualbandits}.

\subsection{Dependency on a Window of Previous Items}\label{win-gen}
In this work, we assumed that the reward at position $l$ depends on the attractiveness of both the items at positions $l$ and $l-1$. The result can be generalized to a window of size $S-1$ of the previous items. In other words, we have:
$$ \E\left[ r_{a}^l | \his_t \right] = f^l\left({\theta^l}^{\textup{T}}(v_{a^l} + \sum_{i=1}^{S-1} w_{l,i} v_{a^{l-i}})\right).$$
Here, $w_{l,i}$ denotes the dependency of position $l$ to the item shown in position $l-i$. In this case, we need to modify the definition of super-arm to be an $S$-tuple of items instead of a pair of items. This generalization would change the size of the $L$-layered graph over super-arms while the structure remains very similar. In more detail, to build a $L$-layered graph $G$ over the super-arms of $S$-tuple items, we add $K^i$ vertices to the $i$-th layer, $1 \leq i \leq S$, and $K^S$ vertices for the $l$-th layer, $S < l \leq L$. The edges would connect two nodes at layer $l$ to layer $l+1$ if and only if the vertex at layer $l+1$ is left shifted of the vertex at layer $l$.

Now, we can generalize the previous results. For instance, we will have the following algorithm and theorem based on UCB algorithm:

\begin{algorithm}
\caption{winRankUCB}\label{algo:win-ucb}
\begin{algorithmic}[1]
\STATE \textbf{Input:} $\lambda > 0$, $\delta \in (0,1)$, $L$, $S \geq 1$, $\{w_{l,i}\}_{l\leq L \; , \; i \leq S-1}$, $T$, arm set $A = \{1, \ldots, K\}$, and vector $v_0$
\STATE Create $L$-layered graph $G = \bigcup_{i=1}^K G_i$ over $S$-tuple super-arms of set $A$
\STATE Initialization: $\hat{\theta}_0^l = 0$, $V_0^l = \lambda I$ for $l \in [L]$, and for any edge $e$ of $G$, set $\hat{c}_e = 0$
\FOR{$t = 1, 2, \ldots, T$}
    \STATE Obtain $p_i \gets \mathrm{ShortestPathAlgorithm}(-G_i)$ for all $i \in [K]$ simultaneously
    \STATE $p_\star \gets \argmin_{p_i} \sum_{e \in p_i} \hat{c_e}$
    \STATE Choose action $a_t$ as the ordered vertices of path $p_\star$
    \STATE Play $a_t$ and observe $r_{a_t}^l$ for $l \in [L]$
    \FOR{$l = 1, \ldots L$}
        \STATE $V_t^l(\lambda) \gets V_{t-1}^l + (v_{a_t^l} + \sum_{i=1}^{S-1} w_{l,i} v_{a_t^{l-i}})(v_{a_t^l} + \sum_{i=1}^{S-1} w_{l,i} v_{a_t^{l-i}})^{\textup{T}}$
        \STATE Create $\Cc_{t+1}^l$ based on Equation \ref{eq:conf-gen}
        \STATE $\mathrm{UCB}_{t+1}^l(i_1, \ldots, i_{S-1}, j) \gets \max_{\theta \in \Cc_{t+1}^l}f^l\left(\langle \theta \; , \; v_j + \sum_{k=1}^{S-1} w_{l,k} v_{i_k} \rangle\right)$ for all super-arms
        \STATE Update $\hat{c}_e$, for any edge $e$, based on Equation \ref{eq:ucb_weight} modified for $S$-tuple nodes of $G$
    \ENDFOR
\ENDFOR
\end{algorithmic}
\end{algorithm}

\begin{theorem}\label{thm:winGen}
Under Assumption \ref{ass:ucb}, with probability at least $1-\delta$, the expected regret of the winRankUCB algorithm satisfies:
\begin{equation}
  \Rr_T \leq 2\sqrt{2}\frac{c_2}{c_1}L\sqrt{dT \beta_T \log\bigg(1+\tfrac{T\big((1 + \max_{l \in [L]} \sum_{i=1}^{S-1}|w_{l,i}|)m_1\big)^2}{d\lambda}\bigg)}
\end{equation}
where $\sqrt{\beta_T} = \max_{l \in [L]} \sqrt{\lambda} m_2 + \sqrt{2\log\left(\frac{1}{\delta}\right) + \log\left(\frac{\det\left(V_T^l(\lambda)\right)}{\lambda^d}\right)}$.
\end{theorem}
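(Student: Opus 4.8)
The plan is to follow the proof of Theorem~\ref{thm:genlinucb} in Appendix~\ref{app:ucbproof} almost verbatim, with the feature vectors $x_t^l = v_{a_t^l} + w_l v_{a_t^{l-1}}$ replaced throughout by the windowed features $x_t^l = v_{a_t^l} + \sum_{i=1}^{S-1} w_{l,i}\, v_{a_t^{l-i}}$, and with the $L$-layered graph over pair super-arms replaced by the $L$-layered graph over $S$-tuple super-arms described above. First I would observe that Lemma~\ref{lem:genlinconf} and Lemma~\ref{lem:thetatog} carry over unchanged: their proofs only use that $g_t^l(\theta) = \lambda\theta + \sum_{s\le t} f^l(\langle\theta, x_s^l\rangle)x_s^l$ and $V_t^l = \lambda I + \sum_{s\le t} x_s^l {x_s^l}^{\textup{T}}$ for whatever the current features are, not the particular form of $x_s^l$; so in particular $\theta^l \in \Cc_t^l$ for all $t\in[T]$ and $l\in[L]$ with probability at least $1-\delta$ (this is the event on which we argue; we also implicitly invoke Assumption~\ref{ass:genlin} for the derivative bounds $c_1,c_2$ appearing in the statement).

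Next I would reproduce the regret bound. With $R_t$ the instantaneous regret and $a_\star$ the optimal list, optimism together with the fact that Algorithm~\ref{algo:win-ucb} plays the longest path $a_t$ in the $S$-tuple graph $G$ gives, as in Appendix~\ref{app:ucbproof}, $\sum_{l=1}^L f^l(\langle\theta^l,x_{a_\star}^l\rangle) \le \sum_{l=1}^L \mathrm{UCB}_t^l(a_t^{l-1},a_t^l) = \sum_{l=1}^L f^l(\langle\tilde\theta_t^l,x_{a_t}^l\rangle)$ for suitable $\tilde\theta_t^l\in\Cc_t^l$; then Assumption~\ref{ass:genlin}, Cauchy--Schwarz in the $V_{t-1}^l$-norm, Lemma~\ref{lem:thetatog} and Lemma~\ref{lem:genlinconf} yield $R_t \le \sum_{l=1}^L \frac{2c_2}{c_1}\sqrt{\beta_{t-1}^l}\,\|x_{a_t}^l\|_{V_{t-1}^{l^{-1}}}$, while Assumption~\ref{ass:ucb}(c) gives $R_t\le 2L$. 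Summing over $t$, using that $\{\det V_t^l\}_t$ is increasing to pull out $\sqrt{\beta_T}$, applying Cauchy--Schwarz over the $(t,l)$ double sum, and then the elliptical potential lemma (Lemma~\ref{lem:ellep}) with $b_t = x_{a_t}^l$ and the norm bound $M = (1+\max_{l\in[L]}\sum_{i=1}^{S-1}|w_{l,i}|)m_1$ — which follows from the triangle inequality and Assumption~\ref{ass:ucb}(a) — produces the stated bound after substituting $\mathrm{tr}(V_0^l)=d\lambda$ and $\det(V_0^l)=\lambda^d$.

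I do not expect a genuine obstacle in the analytic part; the only real work beyond transcription is combinatorial. One must check that the construction over $S$-tuple super-arms — where an edge joins a node at layer $l$ to a node at layer $l+1$ precisely when the latter is the left-shift of the former — is still $L$-layered and acyclic, that every $L$-vertex path decodes to a valid length-$L$ ordered list, and, most delicately, that the appropriately modified edge-weight scheme of Equation~\ref{graph:weight} still telescopes so that the total weight of a path equals $\sum_{l=1}^L f^l(\langle\theta^l, v_{a^l}+\sum_{i=1}^{S-1} w_{l,i}v_{a^{l-i}}\rangle)$. Once that bookkeeping is in place, optimism and the longest-path argument transfer word for word, and the remaining regret computation is identical to the one in Appendix~\ref{app:ucbproof}.
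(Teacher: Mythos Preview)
Your proposal is correct and mirrors the paper's own proof, which simply states that the argument is identical to that of Theorem~\ref{thm:genlinucb} after replacing pair super-arms by their $S$-tuple counterparts. If anything, you spell out more carefully than the paper does the combinatorial bookkeeping needed for the $S$-tuple $L$-layered graph and the edge-weight telescoping.
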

The proof is exactly the same as Theorem \ref{thm:genlinucb} achieved by replacing the pair of actions with their $S$-tuple counterparts. However, the size of the $L$-layered graph would be $O(\frac{K^2}{K-1}(K^S-1)+(L-S)K^{S+1})$, which directly affects the run-time of the shortest path algorithm. Depending on the computational power, we might be able to solve the shortest path problem for some large value of $S$.
\end{document}